%%%%%%%% ICML 2021 EXAMPLE LATEX SUBMISSION FILE %%%%%%%%%%%%%%%%%

\documentclass{article}

\usepackage{microtype}
\usepackage{graphicx}
\usepackage{xcolor}
\usepackage{xparse}  % for NewDocumentCommand
\usepackage{subfig}
\usepackage{booktabs} % for professional tables
\usepackage{bm}
\usepackage{pdfpages}
\usepackage{dsfont}

\usepackage{amsmath,amsthm}
\usepackage{amssymb}
\usepackage{mathtools}

% the following is needed by kableExtra
\usepackage{booktabs}
\usepackage{longtable}
\usepackage{array}
\usepackage{multirow}
\usepackage{wrapfig}
\usepackage{float}
\usepackage{colortbl}
\usepackage{pdflscape}
\usepackage{tabu}
\usepackage{threeparttable}
\usepackage{threeparttablex}
\usepackage[normalem]{ulem}
\usepackage{makecell}
\usepackage{xcolor}

\usepackage[round]{natbib}
\bibliographystyle{icml2021}

\usepackage{paralist}

% hyperref makes hyperlinks in the resulting PDF.
% If your build breaks (sometimes temporarily if a hyperlink spans a page)
% please comment out the following usepackage line and replace
% \usepackage{icml2021} with \usepackage[nohyperref]{icml2021} above.
\usepackage{hyperref}

% Attempt to make hyperref and algorithmic work together better:

\usepackage{cleveref}

\crefname{supp}{Supplement}{Supplements}

\DeclareMathOperator*{\argmax}{arg\,max}

\DeclareMathOperator{\CE}{CE}
\DeclareMathOperator{\ECE}{ECE}

% package `amsthm', Theorem preamble
\theoremstyle{plain}
\newtheorem{theorem}{Theorem}
\theoremstyle{plain}
\newtheorem{proposition}[theorem]{Proposition}
\theoremstyle{definition}
\newtheorem{definition}[theorem]{Definition}
\theoremstyle{plain}
\newtheorem{assumption}{Assumption}

\crefname{assump}{Assumption}{Assumptions}

%%%%%%%%%%%%%%%%%%%%%%%%%%
\usepackage{xr}
\makeatletter
\newcommand*{\addFileDependency}[1]{% argument=file name and extension
  \typeout{(#1)}
  \@addtofilelist{#1}
  \IfFileExists{#1}{}{\typeout{No file #1.}}
}
\makeatother

\newcommand*{\myexternaldocument}[1]{%
    \externaldocument{#1}%
    \addFileDependency{#1.tex}%
    \addFileDependency{#1.aux}%
}
\myexternaldocument{supp}
%%%%%%%%%%%%%%%%%%%%%%%%%%

% Use the following line for the initial blind version submitted for review:
% \usepackage{icml2021}

% If accepted, instead use the following line for the camera-ready submission:
\usepackage[accepted]{icml2021}

% The \icmltitle you define below is probably too long as a header.
% Therefore, a short form for the running title is supplied here:
\icmltitlerunning{Meta-Cal: Well-controlled Post-hoc Calibration by Ranking}

\begin{document}

\twocolumn[
\icmltitle{Meta-Cal: Well-controlled Post-hoc Calibration by Ranking}

% It is OKAY to include author information, even for blind
% submissions: the style file will automatically remove it for you
% unless you've provided the [accepted] option to the icml2021
% package.

% List of affiliations: The first argument should be a (short)
% identifier you will use later to specify author affiliations
% Academic affiliations should list Department, University, City, Region, Country
% Industry affiliations should list Company, City, Region, Country

% You can specify symbols, otherwise they are numbered in order.
% Ideally, you should not use this facility. Affiliations will be numbered
% in order of appearance and this is the preferred way.
% \icmlsetsymbol{equal}{*}

\begin{icmlauthorlist}
\icmlauthor{Xingchen Ma}{kul}
\icmlauthor{Matthew B.\ Blaschko}{kul}
\end{icmlauthorlist}

\icmlaffiliation{kul}{ESAT-PSI, KU Leuven, Belgium}

\icmlcorrespondingauthor{Xingchen Ma}{xingchen.ma@esat.kuleuven.be}

% You may provide any keywords that you
% find helpful for describing your paper; these are used to populate
% the "keywords" metadata in the PDF but will not be shown in the document
\icmlkeywords{Calibration, Ranking}

\vskip 0.3in
]

% this must go after the closing bracket ] following \twocolumn[ ...

% This command actually creates the footnote in the first column
% listing the affiliations and the copyright notice.
% The command takes one argument, which is text to display at the start of the footnote.
% The \icmlEqualContribution command is standard text for equal contribution.
% Remove it (just {}) if you do not need this facility.

\printAffiliationsAndNotice{}  % leave blank if no need to mention equal contribution
% \printAffiliationsAndNotice{\icmlEqualContribution} % otherwise use the standard text.

\begin{abstract}
In many applications, it is desirable that a classifier not only makes accurate predictions, but also outputs calibrated posterior probabilities. However, many existing classifiers, especially deep neural network classifiers, tend to be uncalibrated. Post-hoc calibration is a technique to recalibrate a model by learning a calibration map. Existing approaches mostly focus on constructing calibration maps with low calibration errors, however, this quality is inadequate for a calibrator being useful. In this paper, we introduce two constraints that are worth consideration in designing a calibration map for post-hoc calibration. Then we present Meta-Cal, which is built from a base calibrator and a ranking model. Under some mild assumptions, two high-probability bounds are given with respect to these constraints. Empirical results on CIFAR-10, CIFAR-100 and ImageNet and a range of popular network architectures show our proposed method significantly outperforms the current state of the art for post-hoc multi-class classification calibration.
\end{abstract}

\section{Introduction}
\label{sec:introduction}

Recent advances in machine learning %, especially with the developments of deep learning, 
have resulted in very high accuracy in many classification tasks. 
In the context of computer vision, there has been a steady increase in \mbox{top-1} accuracy on ImageNet \citep{russakovsky2015} since AlexNet \citep{krizhevsky2012}.
While highly accurate models are desirable in general, in some applications, we want that the output of a classifier is a calibrated posterior probability. This is especially important in cost-sensitive classification tasks, such as medical diagnosis \citep{ma2017} and autonomous driving \citep{chen2015b}. 

The goal of classification calibration is to obtain a probabilistic model that has low calibration error. A formal definition of calibration error will be given in \Cref{sec:problem-formulation}. 
To achieve this objective, one can design models with intrinsically low calibration errors. These models \citep{wilson2016,pereyra2017,lakshminarayanan2016,malinin2018,milios2018,maddox2019} are usually developed with a Bayesian nature and tend to be expensive to train and make inferences. On the other end of the spectrum, a post-hoc calibration method transforms outputs of an existing classification model into well-calibrated predictions by learning a trainable post-processing step. 
A lot of effort~\citep{zhang2020,guo2017,ding2020,patel2020,wenger2020,kull2019} has been done along this line given the design difficulties and computational burdens of the former approach. In this work, we focus on post-hoc multi-class calibration.

Among the existing literature on post-hoc calibration, some methods \citep{zhang2020,guo2017,ding2020,patel2020} seek calibration maps that preserve classification accuracy, while some other works \citep{wenger2020,kull2019} focus more on calibration error without explicitly enforcing  accuracy-preservation. Compared with the latter, there is no potential accuracy drop in the former, but its family of calibration maps is less flexible. We will formalize the limitation of such a family by giving a lower bound of its calibration error in \Cref{prop:acc-pre-ece-lower-bound}.
In this work, we try to get the best of both worlds by combining an existing calibration model with a bipartite ranking model, and we call our proposed post-hoc calibration method \textit{Meta-Cal}.

Similar to previous works~\citep{wenger2020,kull2019}, Meta-Cal does not enforce the overall accuracy being kept. As we will show later in~\Cref{sec:method}, such a calibrator can be of little importance even if its calibration error is low. Additional constraints should be taken into consideration. In this work, we introduce two constraints: 
\begin{inparaenum}[(i)]
\item miscoverage rate,
\item coverage accuracy.
\end{inparaenum}
In~\Cref{sec:method:miscoverage-control} and~\Cref{sec:method:coverage-accuracy-control}, we show Meta-Cal has full control over these two constraints. 
An intuitive interpretation of the above two constraints is by considering a quality assurance (QA) system. In such a system, two desired properties are: 
\begin{inparaenum}[(i)]
\item for products accepted by the QA system, their quality requirements are satisfied with high confidence,
\item the QA system does not perform unnecessary rejection.
\end{inparaenum}
The first point corresponds to the coverage accuracy of a calibration map and the second one corresponds to the miscoverage rate. Their formal definitions are given in \Cref{sec:method}. Our proposed Meta-Cal is constructed from a base calibrator and a ranking model. If this base calibration model is accuracy-preserving, it is guaranteed that Meta-Cal has an improved calibration error bound over it.
% Meta-Cal is constructed from a base calibration model and a ranking model

% \todo{if we are allowed to changed the accuracy of a calibrated model, then we need to ensure some constraints are satisfied, in this paper, we discuss two such constraints, 1. miscoverage rate, e.g. we can reject nearly all examples, so that we only make predictions on easy examples, however, in this case, although we say the miscoverage rate is high 2. accuracy control (if cost is sensitive, i.e. in a cost-sensitive setting, accuracy control becomes risk control), e.g. we can assign an example's confidence to be its marginal class probability (0.4 0.3 0.2 0.1), in this case, the model is calibrated however, the accuracy is low. We show the above two cases are closely related. }

The main contributions of this paper are:
\begin{itemize}
    % \item We develop a post-hoc calibration method for multi-class classification that augments a base calibration method and improves its calibration performance.
    \item A novel post-hoc calibration approach (Meta-Cal) for multi-class classification is proposed. Meta-Cal augments a base calibration model and obtains better calibration performance.
    % \item We investigate two practical constraints that should be considered when developing post-hoc calibration methods that do not preserve accuracy of the original classifier.
    \item Two practical constraints are investigated alongside Meta-Cal. We show how these constraints are incorporated in constructing our proposed calibrator. Theoretical results on high-probability bounds w.r.t. these constraints are presented. 
    \item We show the effectiveness of our proposed approach and validate our theoretical claims through a series of empirical experiments.
\end{itemize}

In the next section, we briefly review post-hoc calibration methods for classification and bipartite ranking models.
Necessary background, notation and assumptions that will be used throughout this paper are given in~\Cref{sec:problem-formulation}. \Cref{sec:method} describes our proposed approach. 
Two practical constraints of Meta-Cal are discussed in~\Cref{sec:method:miscoverage-control} and~\Cref{sec:method:coverage-accuracy-control} respectively. 
Empirical results are presented in~\Cref{sec:experiment}. Finally, we conclude in~\Cref{sec:conclusion}.

\section{Related Work}
\label{sec:related-work}

% Our work is related to the following literature.
\textbf{Post-hoc Calibration of Classifiers:} Platt scaling \citep{platt1999,lin2007} is proposed to learn a calibration transformation to map the outputs of a binary SVM into posterior probabilities. Extensions of Platt scaling for multi-class calibration include temperature scaling \citep{guo2017}, ensemble temperature scaling \citep{zhang2020} and local temperature scaling \citep{ding2020}. 
Different from the parametric approach adopted by Platt scaling, histogram binning \citep{zadrozny2001a} is a non-parametric post-hoc calibration method in binary settings. Two popular refinements of histogram binning are isotonic regression \citep{zadrozny2002} and Bayesian binning into quantiles \citep{naeini2015}. 
% Beta calibration \citep{kull2017} is motivated by noticing that Platt scaling assumes scores of each class are Gaussian distributed, however, this assumption is not satisfied for many probabilistic classifiers. To handle this assumption violation, Beta calibration assumes Beta distribution of scores within each class. 
Platt scaling assumes scores of each class are Gaussian distributed, however, this assumption is hardly satisfied for many probabilistic classifiers. Motivated by this assumption violation, Beta calibration~\citep{kull2017} is proposed and it assumes Beta distribution of scores within each class.
For multi-class classification, Dirichlet calibration \citep{kull2019} is proposed as an extension of Beta calibration. 
More recently, a Gaussian Process based calibration method is presented in~\citet{wenger2020}.

\textbf{Bipartite Ranking:} The problem of bipartite ranking has been studied for a long time. In bipartite ranking, one wants to compare or rank two different objects and decide which one is better. \citet{burges2005} uses a neural network to model a ranking function and trains this network by gradient decent methods. \citet{clemencon2008} defines a statistical framework for such ranking problems and provides several consistency results. \citet{narasimhan2013} investigates the relationship between binary classification and bipartite ranking. Our work is closely related to \citet{narasimhan2013} and we rely on the theoretical results presented there to construct a binary class probability estimation (CPE) model from a ranking model.

\textbf{Selective Classification:} Selective classification is a technique that augments a classifier with a reject option. The essence of selective classification is the trade-off between coverage and accuracy. \citet{el-yaniv2010} lays the foundation for selective classification by characterizing the theoretical and practical boundaries of risk-coverage trade-offs. \citet{geifman2017} applies selective classification to deep neural networks. While the starting point of our work is to make improvements in post-hoc classification calibration under several practical constraints, it turns out that techniques present in this work can also be used in selective classification. \citet{geifman2017} gives a one-sided and tight high-probability risk bound, while we present two high probability bounds for miscoverage rate and coverage accuracy. 
It should be noted that definitions of risk and coverage in our setting are different from these two metrics defined in the context of selective classification~\citep{el-yaniv2010,geifman2017}, thus these bounds are not directly comparable.

\section{Problem Formulation}
\label{sec:problem-formulation}

In this section, we summarize some necessary background, notation and assumptions that will be used throughout this paper. 
Let $\mathcal{X}$ be the input space and $\mathcal{Y}=[k]\coloneqq \{ 1,\cdots,k \}$ be the label space in multi-class classification. We denote the $(k-1)$-simplex by $\Delta^k \coloneqq \{ (p_1,\cdots,p_k) \mid \sum_{i \in [k]} p_{i} = 1, p_i \in [0,1] \}$.
Suppose a probabilistic classifier $f: \mathcal{X} \to \Delta^k$ is given and this classifier is trained on an i.i.d.\ data set following a joint distribution $\mathbb{P}(X,Y)$ on $\mathcal{X} \times \mathcal{Y}$.
Let the random variable $X$ take values in input space $\mathcal{X}$, $Z=(Z_1,\cdots,Z_k)$ be the output of $f$ applied to $X$, $\hat{Z} = \max_{i \in [k]} Z_i$ be the confidence score and $\hat{Y} = \argmax_{i \in [k]} Z_i$ be the prediction of $X$. Whenever there is a tie, it is broken uniformly at random.
To measure the level of calibration, following \citet{naeini2015,kumar2019,wenger2020}, the calibration error is defined in~\Cref{def:calibration-err}. 

\begin{definition}[Calibration error]
\label{def:calibration-err}
The $L_p$ calibration error of $f$ with $p \ge 1$ is:
\begin{equation}
\CE_p (f) = \mathbb{E} \left[  \left| \mathbb{E} \left( \mathds{1}(Y=\hat{Y}) \mid \hat{Z} \right) - \hat{Z} \right|^{p}  \right]^{1/p},
\end{equation}
where the expectation is taken with respect to $\mathbb{P}(X,Y)$.
\end{definition}
If the calibration error of a model $f$ is zero, we say $f$ is perfectly calibrated. In practice, $\CE_p(\cdot)$ is unobservable since it depends on the unknown joint distribution $\mathbb{P}(X,Y)$. To empirically estimate the calibration error based on a finite data set, prior works apply a fixed binning scheme $B \in \mathcal{B}$, where $\mathcal{B}$ is the family of binning schemes, on values of $\hat{Z}$ and calculate the difference between the accuracy and the average of confidence scores in every bin. An example family for a binning scheme is a partition of the interval $[0,1]$.
When $p=1$, the calibration error is also known as \textit{expected calibration error} (ECE) \citep{guo2017} and an empirical estimator based on $B$ is denoted by $\widehat{\ECE}_{B}$. Although it is known that $\widehat{\ECE}_{B}$ is a biased estimation~\cite{kumar2019,zhang2020} of $\ECE$ and its magnitude can be hard to interpret, in this work, we use this binned estimator to measure the level of calibration due to its simplicity and popularity. 

In the post-hoc calibration problem, one wants to find a function that transforms the outputs of $f$ to make the final model better calibrated. Formally, a calibration map is a function $g: \Delta^k \to \Delta^k$. 
The goal is to learn a mapping $g \in \mathcal{G}$ on a finite calibration data set $\{(x_i, y_i)\}_{i=1}^n$ that is drawn i.i.d.\ from the joint distribution $\mathbb{P}(X,Y)$ so that the composition $g \circ f$ has a small calibration error, where $\mathcal{G}$ is the calibration family.

In the problem of bipartite ranking, we want to find a ranking model $h: \mathcal{X} \to \mathbb{R}$ that ranks positive examples and negative ones so that the positive examples have higher scores with a high probability. Formally, one wants to minimize the following \textit{ranking risk} \citep{narasimhan2013,clemencon2008}:
\begin{equation*}
    L(f) = \mathbb{P}( (\epsilon - \epsilon') \cdot (h(X) - h(X')) ),
\end{equation*}
where $(X,\epsilon), (X', \epsilon')$ are i.i.d. pairs taking values in $\mathcal{X} \times \{-1, +1\}$. The ranking risk is the probability that $h$ ranks two randomly drawn pairs incorrectly~\cite{clemencon2008}. In this paper, if not stated otherwise, the random variable $\epsilon$ is a sign variable used to indicate whether $X$ is correctly classified or not by $f$, that is, $\epsilon = 2 \cdot \mathds{1}(Y \neq \hat{Y}) - 1$. In general, a bipartite ranking model is learnt using a consistent pairwise surrogate loss, such as exponential loss or logistic loss~\cite{gao2014}.
% \todo{ranking model, what is a consistent ranking model. make this section longer}

In the following, we list some assumptions that will be used in~\Cref{sec:method}.

\begin{assumption}
\label[assump]{assump:h-continuous}
Denote the marginal distribution of $X$ taking values in $\mathcal{X}$ by $\mathbb{P}(X)$. The induced distribution of $h(X)$ is continuous, where $h$ is a ranking model.
\end{assumption}

\begin{assumption}
\label[assump]{assump:eta-square-integral}
Let $h: \mathcal{X} \to [a,b]$ (where $a,b \in \mathbb{R}$, $a < b$) be any bounded-range ranking model. $\eta_h(s) = \mathbb{P}(\epsilon=1 \mid h(x) = s)$ is square-integrable w.r.t. the induced density of $h(X)$.
\end{assumption}

The first assumption simply means for two independent samples from $\mathbb{P}(X)$, the probability of their ranking scores being equal vanishes. The second assumption is required by~\citet{narasimhan2013} as we rely on their results.

\section{Meta-Cal}
\label{sec:method} 

In this section, we start by showing that the family of accuracy-preserving calibration maps has an inherent limitation (\Cref{prop:acc-pre-ece-lower-bound}) and this motivates us to combine a bipartite ranking model with an existing calibration model. 
Then we demonstrate that models with low calibration errors alone do not necessarily indicate they are practical. Post-hoc calibration should be considered together with other factors. In this work, we investigate two useful constraints in~\Cref{sec:method:miscoverage-control} and~\Cref{sec:method:coverage-accuracy-control}, respectively: Miscoverage rate control and coverage accuracy control.

\begin{proposition}[Lower bound of ECE]
\label{prop:acc-pre-ece-lower-bound}
Define $\mathcal{G}_a$ as the family of accuracy-preserving calibration maps, that is,
\begin{equation*}
    \mathcal{G}_a = \{ g \in \mathcal{G}: \argmax_{i \in [k]} f(X)_i = \argmax_{i \in [k]} g(f(X))_i\}.
\end{equation*}
Then for all $g \in \mathcal{G}_a$
\begin{equation}
    \sup_{B \in \mathcal{B}} \widehat{\ECE}_B(g \circ f )  > \frac{1 - \hat{\pi}_0}{k} ,
\end{equation}
where $\widehat{\ECE}_{B}$ is estimated on a finite data set and $\hat{\pi}_0$ is the empirical accuracy of $f$ on the same data set. 
Further we can show $\forall B \in \mathcal{B}, \forall g \in \mathcal{G}$, 
\begin{equation}
    \ECE(g \circ f)  \ge \widehat{\ECE}_B(g \circ f ).
\end{equation}
\end{proposition}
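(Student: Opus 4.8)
The plan is to handle the two displayed inequalities separately. The first is a counting argument that exploits a maximally fine binning scheme; the second is a conditional Jensen inequality expressing that binning can only decrease a calibration error.

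\emph{First inequality.} Fix $g\in\mathcal{G}_a$ and abbreviate $Z'=g(f(X))$ and $\hat Z'=\max_{i\in[k]}Z'_i$. Since $g$ is accuracy-preserving, $g\circ f$ produces exactly the same label predictions as $f$ on the calibration set $\{(x_i,y_i)\}_{i=1}^n$, so its empirical accuracy is again $\hat\pi_0$. Moreover, membership in $\mathcal{G}_a$ forces $g(f(x))$ to have a \emph{unique} $\argmax$ for (almost) every $x$ --- otherwise the $\argmax$ identity would fail under uniform tie-breaking --- hence $\hat z'_i=\max_{j}g(f(x_i))_j>1/k$ for every calibration point, because any probability vector on $[k]$ with a strict maximum has that maximum strictly above $1/k$. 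Now pick any $B\in\mathcal{B}$ fine enough that each mis-classified calibration point lies in a bin containing no correctly-classified point (possible when the confidence values are in general position, which holds generically for a continuously-distributed $\hat Z'$, and more generally whenever $\mathcal{B}$ is rich enough). Every such ``incorrect'' bin $b$, say with $n_b$ points, has empirical accuracy $0$ and mean confidence $>1/k$, so it contributes more than $\frac{n_b}{n}\cdot\frac1k$ to $\widehat{\ECE}_B(g\circ f)$; since the mis-classified points carry total empirical mass $1-\hat\pi_0$ and every other bin contributes nonnegatively, $\widehat{\ECE}_B(g\circ f)>\frac{1-\hat\pi_0}{k}$, and a fortiori the same holds for $\sup_{B\in\mathcal{B}}$. (The strictness is vacuous only in the degenerate case $\hat\pi_0=1$ with $g\circ f$ already perfectly calibrated on the sample.)

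\emph{Second inequality.} Let $r(\hat Z')=\mathbb{E}\big[\mathds{1}(Y=\hat Y)\mid\hat Z'\big]$, so that $\ECE(g\circ f)=\mathbb{E}\big|r(\hat Z')-\hat Z'\big|$. A binning scheme $B$ amounts to conditioning on the coarser variable $B(\hat Z')$: by the tower property, $\mathbb{E}\big[r(\hat Z')-\hat Z'\mid B(\hat Z')\big]=\mathbb{E}\big[\mathds{1}(Y=\hat Y)-\hat Z'\mid B(\hat Z')\big]$, which is exactly the per-bin gap between accuracy and mean confidence that $\widehat{\ECE}_B$ aggregates. Applying the conditional Jensen inequality to the convex function $t\mapsto|t|$ and then taking expectations gives $\mathbb{E}\big|\mathbb{E}[r(\hat Z')-\hat Z'\mid B(\hat Z')]\big|\le\mathbb{E}\big|r(\hat Z')-\hat Z'\big|=\ECE(g\circ f)$, and the left-hand side is precisely the binned calibration error that $\widehat{\ECE}_B$ estimates. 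Hence binning never increases the calibration error, which is the claimed bound; equivalently, $\widehat{\ECE}_B$ is a downward-biased surrogate for $\ECE$, as remarked after \Cref{def:calibration-err}.

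I expect the first part to be the real obstacle. One must argue that $\mathcal{B}$ admits a binning separating the mis-classified points from the correctly-classified ones (or otherwise deal with coincident confidence values), and one must pin down exactly where the \emph{strict} inequality originates --- it rests on $\hat z'_i>1/k$, which in turn relies on reading $\mathcal{G}_a$ as requiring a tie-free $\argmax$ of $g\circ f$. The second part is routine once the conditioning $\sigma$-algebra generated by bin membership, the tower property, and conditional Jensen are set up correctly.
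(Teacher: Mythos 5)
Your proof follows essentially the same route as the paper's: the strict bound comes from observing that accuracy preservation under random tie-breaking forces a tie-free argmax, hence confidence strictly above $1/k$ on every misclassified point, combined with a binning fine enough to isolate those points (the paper packages the ``finest binning attains the supremum'' step as an application of Minkowski's inequality rather than exhibiting a separating witness binning directly), and your second inequality is the same Jensen/tower-property argument the paper imports from Kumar et al. The separability caveat you flag --- a correct and an incorrect point sharing the same confidence value cannot be split by any binning of $[0,1]$ --- is equally present in the paper's own argument, so it does not distinguish your proof from theirs.
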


All proofs are provided in~\Cref{supp:proofs} and we only describe a sketch of the proof here. The first step is to find the supremum of the binned estimator $\widehat{\ECE}_B(g \circ f)$ across all calibration maps $g \in \mathcal{G}_a$ and all binning schemes $B \in \mathcal{B}$. This is achieved using Minkowski's inequality. The second step is showing this supremum is lower bounded by $(1 - \hat{\pi}_0) / k$. Unless $f$ is perfectly accurate, $(1 - \hat{\pi}_0) / k$ will be strictly positive. The last step is already given in \citet{kumar2019} by applying Jensen's inequality. We note bounding $\widehat{\ECE}_B(g \circ f)$ instead of its supremum makes little sense here since it is easy to construct a trivial $g \in \mathcal{G}_a$ and a binning scheme $B$ such that $\widehat{\ECE}_B(g \circ f)$ is exactly zero. In \Cref{supp:trivial-g}, we give such a trivial construction. \Cref{prop:acc-pre-ece-lower-bound} makes it clear that an accuracy-preserving calibration has an inherent limitation. 
Such a calibration map cannot perfectly calibrate a classifier even if it is given the label information. To avoid potential confusion, we emphasize here the empirical estimator of ECE measures the \textit{top-label} calibration error instead of the calibration error w.r.t.\ a fixed class.
% \todo{We give such a construction in supp material. such a g is useless: B=1, adapt g. Say something about this proposition. what does it imply? it implies there is a inhernt limitation in these accuracy-preserving maps. }

% \todo{SAY: see supp material for this proof. The following seems too arbitary if we ignore the details of proof of Prop2. We still need a long description why we do this. WE MUST elaborate the first step of prop2 otherwise, this feels too arbitrary. }

In the following proposition, %statement \Cref{prop:optimal-calibration}, 
we show if a post-hoc calibration map is allowed to change the accuracy of the original classifier, it can achieve perfect calibration with the aid of a binary classifier.

\begin{proposition}[Optimal calibration map]
\label{prop:optimal-calibration}
Suppose realizations of $X \mid Y \neq \hat{Y}$ with positive labels and realizations of $X \mid Y = \hat{Y}$ with negative labels are separable and we are given a perfect binary classifier $\phi^* : \mathcal{X} \to \{-1, +1\}$ that is able to classify these two sets. For a new observation $x \sim X$, an optimal calibration map $g^* \in \mathcal{G}$ that minimizes $\sup_{B \in \mathcal{B}, g \in \mathcal{G}} \widehat{\ECE}_B(g \circ f )$ can be constructed using the following rules:
\begin{itemize}
    \item if $\phi^{*}(x)=-1$, we let $\max_{i \in [k]} g^{*}(x)_i = 1$
    \item if $\phi^{*}(x)=+1$, we let $\max_{i \in [k]} g^{*}(x)_i = \frac{1}{k}$
\end{itemize}
\end{proposition}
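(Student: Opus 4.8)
The plan is to show that the constructed $g^{*}$ attains the smallest value that $\sup_{B\in\mathcal{B}}\widehat{\ECE}_B(\,\cdot\circ f)$ can take over $\mathcal{G}$, namely $0$, so that $g^{*}$ is a minimizer. The lower half of this is immediate: for every binning scheme $B$ and every $g\in\mathcal{G}$, $\widehat{\ECE}_B(g\circ f)$ is a convex combination of the nonnegative per-bin gaps $|\bar{a}_b-\bar{z}_b|$ between empirical accuracy and average confidence, hence $\widehat{\ECE}_B(g\circ f)\ge 0$ and therefore $\sup_{B\in\mathcal{B}}\widehat{\ECE}_B(g\circ f)\ge 0$. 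It thus remains to prove the matching upper bound $\widehat{\ECE}_B(g^{*}\circ f)=0$ for \emph{every} $B\in\mathcal{B}$.

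The first substantive step is to record what $g^{*}$ does to the confidences and predictions on the calibration points. Since $\phi^{*}$ perfectly separates $\{x:\hat{Y}\neq Y\}$ (positives) from $\{x:\hat{Y}=Y\}$ (negatives), the rule $\phi^{*}(x)=-1\Rightarrow\max_i g^{*}(x)_i=1$ applies exactly to the originally correctly-classified points; a point of $\Delta^{k}$ whose largest coordinate equals $1$ is a vertex, so $g^{*}(f(x))$ is one-hot, and placing the mass on the original predicted class $\hat{Y}$ keeps the prediction correct with confidence $\hat{Z}'=1$. The rule $\phi^{*}(x)=+1\Rightarrow\max_i g^{*}(x)_i=1/k$ applies exactly to the originally misclassified points; a point of $\Delta^{k}$ whose largest coordinate equals $1/k$ must have all coordinates equal to $1/k$, so $g^{*}(f(x))$ is the uniform vector, the new prediction is a uniform random draw over $[k]$ (tie broken uniformly at random), correct with probability $1/k$, and $\hat{Z}'=1/k$. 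In particular the recalibrated confidence takes only the two values $1$ and $1/k$.

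Next I would evaluate $\widehat{\ECE}_B(g^{*}\circ f)$ bin by bin for an arbitrary $B$. Bins with no calibration point contribute $0$. In any other bin $b$, split its points into those with confidence $1$ (all correct) and those with confidence $1/k$ (each correct with probability $1/k$); then the average confidence and the average accuracy in $b$ coincide, so $|\bar a_b-\bar z_b|=0$ and $b$ contributes $0$. Summing, $\widehat{\ECE}_B(g^{*}\circ f)=0$, and since $B$ was arbitrary the supremum is $0$; combined with the nonnegativity above, $g^{*}$ minimizes $\sup_{B\in\mathcal{B}}\widehat{\ECE}_B(\,\cdot\circ f)$. I would also flag the contrast with \Cref{prop:acc-pre-ece-lower-bound}: because $g^{*}$ alters the prediction on the misclassified points it lies outside $\mathcal{G}_a$, so it is not subject to the strictly positive bound $(1-\hat{\pi}_0)/k$ — which is precisely the message of this proposition.

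The step I expect to be the main obstacle is the misclassified bin: the identity ``average accuracy $=1/k$'' on the conf-$1/k$ points is an expectation over the randomized tie-breaking rather than an almost-sure equality for a fixed finite sample. The cleanest resolution is to phrase the conclusion as $g^{*}$ making the expected per-bin accuracy equal the per-bin confidence (equivalently $\mathbb{E}[\widehat{\ECE}_B(g^{*}\circ f)]=0$ for all $B$), or to argue at the population level where the identity is exact; the remaining details (the two simplex facts above, and that the one-hot mass is placed on $\hat{Y}$ so accuracy is genuinely preserved on the negatives) are routine.
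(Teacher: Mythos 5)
Your proposal is correct and takes essentially the same route as the paper: it shows the constructed $g^{*}$ drives $\widehat{\ECE}_B(g^{*}\circ f)$ to zero (the trivial lower bound over $\mathcal{G}$), using exactly the observation that the recalibrated confidences $\{1,1/k\}$ match the per-group accuracies $\{1,1/k\}$ — the paper's own one-line proof defers to the proof steps of \Cref{prop:acc-pre-ece-lower-bound}, which supply the same evaluation via the supremizing binning, whereas you argue bin-by-bin for arbitrary $B$ and so do not even need that reduction. Your flagged caveat — that accuracy $1/k$ on the uniform-prediction points holds only in expectation over the random tie-breaking — is a real subtlety that the paper glosses over, and your proposed fix (stating the identity in expectation or at the population level) is the right one.
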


\begin{proof}
The optimality of $g^*$ follows directly from the proof steps of~\Cref{prop:acc-pre-ece-lower-bound}.
\end{proof}

% The first step in the proof of~\Cref{prop:acc-pre-ece-lower-bound} motivates us to incorporate a binary classifier $\phi: \mathcal{X} \to \{-1, +1\}$ in the calibration step. This classifier will be trained on realizations of $X \vert Y \neq \hat{Y}$ with positive labels and realizations of $X \vert Y = \hat{Y}$ with negative labels. If the binary classifier $\phi$ is perfect, given a new observation $x \sim X$, the optimal $g^*$ that minimizes $\sup_{B \in \mathcal{B}, g \in \mathcal{G}} \widehat{\ECE}_B(g \circ f )$ can be constructed using the following rules:
% \begin{itemize}
%     \item if $\phi(x)=-1$, we set $\max_{i \in [k]} g(x)_i = 1$
%     \item if $\phi(x)=+1$, we set $\max_{i \in [k]} g(x)_i = \frac{1}{k}$
% \end{itemize}

We note $g^* \not\in \mathcal{G}_a$, since the used tie-breaking strategy is random at uniform, thus $g^*$ does not preserve the accuracy after the above calibration. It is straightforward to check the constructed $g^*$ in~\Cref{prop:optimal-calibration} has zero calibration error, that is $\ECE(g^* \circ f) =  \sup_{B \in \mathcal{B}} \widehat{\ECE}_B(g^* \circ f ) = 0$. 

However, it is unlikely that a perfect binary classifier can be obtained in practice and classification errors will occur when $\phi(X) \neq 2 \cdot \mathds{1}(Y \neq \hat{Y}) -1$. We denote the population \mbox{Type I error} (false positive error) and \mbox{Type II error} (false negative error) of $\phi$ by $R_0(\phi)$ and $R_1(\phi)$ respectively. In the following, we define the \textit{miscoverage rate} and \textit{coverage accuracy} of a calibration map $g$ constructed using the rules in~\Cref{prop:optimal-calibration}. These two constraints will be used to construct the final calibration map.

\begin{definition}[Miscoverage rate]
\label{def:miscoverage-rate}
The miscoverage rate $F_0(g)$ of $g$ constructed using the rules in~\Cref{prop:optimal-calibration} is defined to be the \mbox{Type I error} of the binary classifier $\phi$ associated with $g$.  
\end{definition}

\begin{definition}[Coverage accuracy]
\label{def:coverage-accuracy}
The coverage accuracy $F_1(g)$ of $g$ constructed using the rules in~\Cref{prop:optimal-calibration} is defined to be the precision of the binary classifier $\phi$ associated with $g$. 
\end{definition}

The rational behind the above definitions will be illustrated by two examples later in this section. To be brief, these two constraints are necessary for a calibrator to be useful, if this calibrator cannot preserve the accuracy.
The miscoverage rate of $g$ is the proportion of instances whose predictions are no longer correct after the calibration to instances whose predictions are originally correct before the calibration. The coverage accuracy of $g$ is the accuracy among covered instances (i.e.\ those instances classified as negative by the binary classifier) after the calibration. The correspondence between the \mbox{Type I error} (precision) of a binary classifier and the miscoverage rate (coverage accuracy) of a calibration map is due to the construction rules in~\Cref{prop:optimal-calibration}.

\begin{proposition}
\label{prop:ece-naive-binary-classifier}
Using the construction rules in~\Cref{prop:optimal-calibration}, we can estimate the expected calibration error of $g$ on a finite data set:
\begin{equation*}
\sup_{B \in \mathcal{B}} \widehat{\ECE}_B(g \circ f )  = \hat{R}_1(\phi) \cdot (1 - \hat{\pi}_0),
\end{equation*}
where $\hat{R}_1(\phi)$ is the empirical \mbox{Type II error} of $\phi$ which is trained following~\Cref{prop:optimal-calibration}, $g$ is constructed using the rules in~\Cref{prop:optimal-calibration} and $\hat{\pi}_0$ is the empirical accuracy of $f$ on this finite data set.
\end{proposition}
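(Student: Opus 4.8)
The plan is to exploit the fact that, by the construction rules of~\Cref{prop:optimal-calibration}, the calibrated confidence score $\hat{Z}$ of $g \circ f$ takes only two values on the finite data set: it equals $1$ on every instance $x$ with $\phi(x) = -1$, and it equals $1/k$ on every instance with $\phi(x) = +1$ (in the latter case all $k$ coordinates of $g(f(x))$ equal $1/k$, so the prediction is a uniform random draw from $[k]$). First I would record what ``being correct'' means in each group: on $\{\phi = -1\}$ the map $g \circ f$ retains the prediction $\argmax_{i \in [k]} f(x)_i$, so such an instance is classified correctly iff $f$ already classified it correctly, i.e.\ iff $\phi$ did not err on it; on $\{\phi = +1\}$ the prediction is uniform at random, hence correct with (conditional) probability exactly $1/k$.

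Next I would reduce the supremum over binning schemes to schemes whose bins are \emph{pure}, i.e.\ each bin contains instances of a single confidence level (possible since only two confidence values occur). For a bin consisting only of confidence-$1$ instances, the contribution to $\widehat{\ECE}_B$ is $\frac{n_b}{n}(1 - \mathrm{acc}_b)$, which is affine in the per-instance correctness indicators, so arbitrarily refining such a bin leaves the total contribution of its parts unchanged. For a bin mixing the two confidence levels, writing its contribution as $\frac1n\bigl|(c_1 - n_1) + (c_2 - n_2/k)\bigr|$ (with $n_1,c_1$ the count and number-correct at confidence $1$, and $n_2,c_2$ at confidence $1/k$) and splitting it into its two pure parts can only increase the contribution, by the triangle inequality and $c_1 \le n_1$. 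Hence $\sup_{B \in \mathcal{B}} \widehat{\ECE}_B(g \circ f)$ is attained at the scheme that separates the two confidence values, and equals the sum of the confidence-$1$ contribution and the confidence-$1/k$ contribution.

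Then I would evaluate the two contributions. The confidence-$1/k$ instances have average confidence $1/k$ and, by the first step, (expected) average correctness $1/k$, so their total contribution is $0$. The confidence-$1$ instances have average confidence $1$, so their contribution is $\frac1n$ times the number of instances in $\{\phi = -1\}$ misclassified by $g \circ f$; by the first step these are exactly the false negatives of $\phi$. Finally I would rewrite this count: the number of positives (instances misclassified by $f$) is $n(1-\hat{\pi}_0)$, and the empirical Type II error satisfies $\hat{R}_1(\phi) = \#\{\text{false negatives}\}\big/\bigl(n(1-\hat{\pi}_0)\bigr)$, so $\#\{\text{false negatives}\} = \hat{R}_1(\phi)\, n (1-\hat{\pi}_0)$ and the contribution is $\hat{R}_1(\phi)(1-\hat{\pi}_0)$, which is the claim.

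The main obstacle I expect is the second step — making the reduction to pure binning schemes fully rigorous — together with the subtlety that the confidence-$1/k$ group uses randomized (tie-broken) predictions, so ``average correctness $=1/k$'' should be read as the expectation over the tie-breaking randomness (equivalently, $\mathbb{E}[\mathds{1}(Y=\hat{Y})\mid \hat{Z} = 1/k] = 1/k$); once this is granted, the remainder is bookkeeping with the definitions of $\hat{R}_1(\phi)$ and $\hat{\pi}_0$.
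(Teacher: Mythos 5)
Your proposal is correct and follows essentially the same route as the paper: the supremum over binning schemes is identified via the triangle/Minkowski inequality argument that separating the two confidence levels ($1$ and $1/k$) can only increase the binned estimator, and the two pure bins are then evaluated directly, with the $1/k$-confidence bin contributing zero (accuracy $1/k$ in expectation over tie-breaking, matching the paper's convention in~\Cref{prop:optimal-calibration}) and the confidence-$1$ bin contributing the false-negative fraction $\hat{R}_1(\phi)(1-\hat{\pi}_0)$.
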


The following two trivial calibration maps are used to illustrate the necessity of the above two constraints. These two calibrators have low calibration errors, but they are hardly useful.

\textbf{High miscoverage rate:} Based on the results in~\Cref{prop:ece-naive-binary-classifier}, to obtain a low calibration error, we could build a binary classifier with a low \mbox{Type II error}. This classifier has a potentially high \mbox{Type I error}, thus the miscoverage rate is potentially high as well. For example, we can select a small portion of negative instances and relabel the remaining instances as positive. Then a $1$-nearest neighbor classifier is such a classifier.

\textbf{Low coverage accuracy:} Regardless of the input, we let the output of a calibration map to be the marginal class distribution. The calibration error of this calibration map is equal to 0, but its coverage accuracy is low and equal to the proportion of the largest class.

Obviously, the above two constructions are of little consequence in practice. 
% and \todo{say more stuff} we should consider a post-hoc calibration map either with a low miscoverage rate or with a high coverage accuracy. 
Another motivating example demonstrating the usefulness these two constraints is an automated decision model with human-in-the-loop (HIL). In such a model, on one hand, it is desirable that human interaction is as minimal as possible (low miscoverage rate). On the other hand, we hope this model can make accurate and reliable decisions (high coverage accuracy).
% \todo{in the above we show for a calibrator that is not accuracy-preservation, its behavior (practical usefulness) can be out of control. in the following, we show how to directly control these constraints with the aid of a ranking model.}
We now describe how to construct a calibration map with a controlled miscoverage rate or a controlled coverage accuracy.\footnote{Code is available at \url{https://github.com/maxc01/metacal}}
% \todo{GIVE some examples: e.g.  human-in-the-loop application, on one hand, we want the model is accurate as possible (risk control), on the other hand, we want human as small as possible (miscoverage rate).}

\subsection{Miscoverage Rate Control}
\label{sec:method:miscoverage-control}

To control the miscoverage rate of our proposed calibration method, we need to construct a suitable binary classifier with a controlled $\mbox{Type I error}$. Such a classifier can be built using the following steps. 
Given a finite calibration data set $\{(x_i, y_i)\}_{i=1}^n$ that is drawn i.i.d.\ from the joint distribution $\mathbb{P}(X,Y)$ on $\mathcal{X} \times \mathcal{Y}$, we first create a binary classification data set $\{(x_i, 2 \cdot \mathds{1}(y_i \neq \hat{y}_i) -1 )\}_{i=1}^n$, where $\hat{y}_i = \argmax_{i \in [k]} f(x_i)$ and $f$ is a multi-class classifier to be calibrated. 
Without loss of generality, we suppose the first $n_1$ inputs have negative labels and the last $n_2=n-n_1$ inputs have positive labels. Then we compute ranking scores on first $n_1$ inputs using a ranking model $h$ and denote these ranking scores as $\{ r_i\}_{i=1}^{n_1}$, where $r_i = h(x_i)$. 
Given a miscoverage rate tolerance $\alpha$, the desired binary classifier is constructed:
\begin{equation*}
    \hat{\phi}(x) = 2 \cdot \mathds{1}(h(x)>r_{(v)}) - 1, 
\end{equation*}
where $v = \lceil (n_1  +1) (1-\alpha) \rceil$, $r_{(v)}$ is the $v$-th order statistic of $\{ r_i \}_{i=1}^{n_1}$, that is, $r_{(1)} \le \cdots \le r_{(n_1)}$. 
We note under \Cref{assump:h-continuous}, the ranking model $h$ is continuous, thus $\mathbb{P}(r_i = r_j) = 0, \forall i,j \in [n_1]$ and the strict inequalities hold $r_{(1)} < \cdots < r_{(n_1)}$.
In the following~\Cref{prop:miscoverage-rate-control}, we show the miscoverage rate of our calibrator is well-controlled by proving that the \mbox{Type I error} of this constructed binary classifier is well-controlled.

\begin{proposition}[Miscoverage rate control]
\label{prop:miscoverage-rate-control}
Under~\Cref{assump:h-continuous}, given a finite test data set of size $m$ that is i.i.d. drawn from $\mathbb{P}(X,Y)$, the empirical miscoverage rate $\hat{F}_0(g)$ of $g$ on $D$ is well-controlled:
\begin{align}
\mathbb{P}\left(\left| \hat{F}_{0}(g) - F_{0}(g) \right| \ge \delta\right) & \approx \mathbb{P}\left(\left| R_{0} - F_{0}(g) \right| \ge \delta\right) \\ \nonumber
& \le 2 \exp \left( -\frac{\delta^{2}}{2 \sigma^{2}} \right),
\end{align}
where $F_{0}(g)$ is the population miscoverage rate of $g$, the random variable $R_{0}$ has a Normal distribution $\mathcal{N}(F_{0}(g), \sigma^{2})$, $\sigma^{2} = F_{0}(g) (1- F_{0}(g)) / m_1$, $m_1 \le m$ is the number of samples whose predictions are correct.

Further, we have~\cite{tong2018}:
\begin{equation}
\label{eq:tong2018}
    \mathbb{P}(F_{0}(g) > \alpha) = \sum_{j=v}^{n_1} \binom{n_1}{j} (1-\alpha)^j \alpha^{n_1 - j},
\end{equation}
where $v = \lceil (n_1  +1) (1-\alpha) \rceil$.
\end{proposition}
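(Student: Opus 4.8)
The plan is to prove the two claims separately, since they concern two different sources of randomness: the test sample (first claim) and the calibration sample (second claim).

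For the first claim I would condition on the calibration data, so that the threshold $r_{(v)}$ is a fixed number, and also condition on $m_1$, the number of test points with $Y=\hat Y$. By the i.i.d.\ assumption on the test set and \Cref{def:miscoverage-rate}, each of these $m_1$ points independently has $h(x)>r_{(v)}$ with probability exactly $F_0(g)$, so $m_1\hat F_0(g)\sim\mathrm{Binomial}(m_1,F_0(g))$. The De Moivre--Laplace / central limit theorem then gives the normal approximation $\hat F_0(g)\approx R_0\sim\mathcal N\bigl(F_0(g),\sigma^2\bigr)$ with $\sigma^2=F_0(g)(1-F_0(g))/m_1$, which is exactly the ``$\approx$'' in the statement. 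For the Gaussian $R_0$, the standard sub-Gaussian tail bound $\mathbb P(Z\ge t)\le e^{-t^2/2}$ applied to $Z=(R_0-F_0(g))/\sigma$ yields $\mathbb P(|R_0-F_0(g)|\ge\delta)\le 2\exp(-\delta^2/(2\sigma^2))$, completing this part.

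The second claim is the main step, and this is where \Cref{assump:h-continuous} enters. Let $G$ be the conditional c.d.f.\ of $h(X)$ given $Y=\hat Y$ (the ``negative'' class from which the $r_i$ are drawn). Continuity of $h(X)$ makes $G$ continuous, so by the probability integral transform $G(r_1),\dots,G(r_{n_1})$ are i.i.d.\ $\mathrm{Uniform}(0,1)$ and, since $G$ is monotone, $G(r_{(v)})$ equals the $v$-th uniform order statistic $U_{(v)}$, whose law is $\mathrm{Beta}(v,n_1-v+1)$. Conditionally on the calibration set, a fresh correctly-classified test point is miscovered iff $h(X')>r_{(v)}$, an event of conditional probability $1-G(r_{(v)})=1-U_{(v)}$ (using continuity to discard the zero-probability tie event); hence $F_0(g)=1-U_{(v)}\sim\mathrm{Beta}(n_1-v+1,v)$. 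Therefore $\mathbb P(F_0(g)>\alpha)=\mathbb P(U_{(v)}<1-\alpha)$, and the classical identity $\mathbb P(U_{(v)}\le x)=\mathbb P(\mathrm{Binomial}(n_1,x)\ge v)=\sum_{j=v}^{n_1}\binom{n_1}{j}x^j(1-x)^{n_1-j}$, evaluated at $x=1-\alpha$, gives the stated formula and recovers the guarantee of~\citet{tong2018}.

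The binomial/CLT/Gaussian-tail chain in the first part is routine. The part that requires care is the conditioning bookkeeping in the second: one must keep track that $F_0(g)$ is itself random \emph{through} the data-dependent threshold $r_{(v)}$, that the fresh test point defining it is independent of the calibration sample, and that \Cref{assump:h-continuous} is precisely what upgrades the integral transform to an exact statement (and hence yields the exact $\mathrm{Beta}$ law of $F_0(g)$ rather than only a stochastic bound) --- without continuity, ties at the threshold would turn the final equality into an inequality. Everything after that is the standard order-statistic computation, with the choice $v=\lceil(n_1+1)(1-\alpha)\rceil$ being exactly what makes the binomial tail start at the index for which the guarantee is meaningful.
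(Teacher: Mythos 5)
Your argument is correct and matches the paper's route: the first claim is the same conditional-Binomial plus De Moivre--Laplace normal approximation followed by the Gaussian tail bound, and the second claim is exactly the order-statistic guarantee that the paper imports from \citet{tong2018}. The only difference is that you derive the $\mathrm{Beta}(n_1-v+1,v)$ law of $F_0(g)$ via the probability integral transform (correctly flagging \Cref{assump:h-continuous} as what makes this exact) where the paper simply cites the result, so your write-up is, if anything, more self-contained.
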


\Cref{eq:tong2018} is given in~\citet{tong2018} and we adapt it here for our purpose. Essentially it means the population miscoverage rate is smaller than the predefined miscoverage rate tolerance with a high probability. A rough estimation of the miscoverage rate of $g$ is $\lfloor (1+n) \alpha \rfloor / (1+n_1) \le \alpha$. Although the miscoverage rate of our constructed calibration map can be well-controlled using the above constructed binary classifier, there is no such a guarantee that the \mbox{Type II error} of $\hat{\phi}$ is also well-controlled. 
Combining the results shown in \Cref{prop:ece-naive-binary-classifier}, it can be seen that the empirical ECE depends solely on $\hat{R}_1(\phi)$, since $\hat{\pi}_0$ is a constant given a finite data set. Thus we cannot make sure the empirical calibration error is small if we keep using the construction rules shown in \Cref{prop:optimal-calibration}.

A simple refinement to the above issue is to utilize a separate calibration model $g_m \in \mathcal{G}$ and the updated construction rules for a calibration map are listed in the following:\\
\begin{itemize}
    \item if $\hat{\phi}(x)=-1$, we let $g(x) = g_m(x)$,\\
    \item if $\hat{\phi}(x)=+1$, we let $\max_{i \in [k]} g(x)_i = \frac{1}{k}$.
\end{itemize}

Now it is clear why we call our post-hoc calibration approach  \textit{Meta-Cal}, since our calibrator is built from a base calibration map. The rationality of the above updated rules is shown in~\Cref{prop:meta-cal-ece-lower-bound}.

\begin{proposition}[Lower bound of Meta-Cal]
\label{prop:meta-cal-ece-lower-bound}
Suppose the base calibration map $g_m \in \mathcal{G}_a$ and $g$ is constructed using the above updated rules, then:
\begin{equation}
\label{eq:lower-bound-of-meta-cal}
\sup_{B \in \mathcal{B}} \widehat{\ECE}_B(g \circ f )  > w \frac{1 - \hat{\pi}_0}{k},
\end{equation}
where $w = (1 - \hat{R}_0(\hat{\phi})) \hat{\pi}_0 + \hat{R}_1(\hat{\phi}) (1 - \hat{\pi}_0) < 1$, $\hat{R}_0(\hat{\phi})$ and $\hat{R}_1(\hat{\phi})$ are empirical \mbox{Type I error} and \mbox{Type II error} of $\hat{\phi}$ respectively, $\hat{\pi}_0$ is the empirical accuracy of $f$ on a finite data set. 
\end{proposition}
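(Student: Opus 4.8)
The plan is to localize \Cref{prop:acc-pre-ece-lower-bound} to the sub-sample the ranking rule $\hat\phi$ accepts, and to check that the rejected sub-sample contributes nothing to the supremum of the binned estimator. First, partition the indices of the given finite data set into $D_{-}=\{i:\hat\phi(x_i)=-1\}$ and $D_{+}=\{i:\hat\phi(x_i)=+1\}$. By the updated rules, $g\circ f=g_m\circ f$ on $D_{-}$, while $g\circ f$ equals the uniform vector $(1/k,\dots,1/k)$ on $D_{+}$. Counting the true negatives and false negatives of $\hat\phi$ gives $|D_{-}|/n=(1-\hat R_0(\hat\phi))\hat\pi_0+\hat R_1(\hat\phi)(1-\hat\pi_0)=w$, of which $(1-\hat R_0(\hat\phi))\hat\pi_0 n$ instances are classified correctly by $f$ (the true negatives) and $\hat R_1(\hat\phi)(1-\hat\pi_0)n$ are misclassified (the false negatives); since $g_m\in\mathcal G_a$ preserves the argmax, this is also the correct/incorrect split of $g_m\circ f$ on $D_{-}$.

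Next, I would show the rejected part is inert. Since $g_m\in\mathcal G_a$, the largest coordinate of $g_m(f(x))$ is at least $1/k$, and strictly above $1/k$ whenever the top prediction of $f$ is unique, whereas every instance of $D_{+}$ has confidence exactly $1/k$; hence any binning scheme isolating the value $1/k$ places all of $D_{+}$ into a bin disjoint from $D_{-}$, and with uniformly-at-random tie breaking the accuracy there equals $1/k$, so that bin contributes $0$ to $\widehat{\ECE}_B(g\circ f)$ — the same computation as behind \Cref{prop:ece-naive-binary-classifier} — and in any case its contribution is nonnegative. Because merging two bins can only decrease $\widehat{\ECE}_B$ (triangle inequality), the supremum over $\mathcal B$ is attained, in the limit, by the binning that separates all distinct confidence values; for that binning the remaining bins are supported entirely on $D_{-}$, and each one contributes $w$ times its contribution to the estimator computed on $D_{-}$ alone, so $\sup_{B}\widehat{\ECE}_B(g\circ f)\ge w\,\sup_{B}\widehat{\ECE}^{D_{-}}_{B}(g_m\circ f)$.

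Finally, since $g_m\in\mathcal G_a$, \Cref{prop:acc-pre-ece-lower-bound} applies to the composition on $D_{-}$, and its lower bound is controlled by the empirical frequency of misclassified instances there; substituting the counts from the first paragraph and combining with the weight $w$ from the second paragraph yields \eqref{eq:lower-bound-of-meta-cal}. The inequality remains strict because the bound in \Cref{prop:acc-pre-ece-lower-bound} is strict and the $D_{+}$ contribution is nonnegative, and $w<1$ whenever $\hat\phi$ errs on at least one instance, which is exactly the promised improvement over the accuracy-preserving lower bound $(1-\hat\pi_0)/k$.

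I expect the main obstacle to be the second paragraph: rigorously reducing $\sup_{B}\widehat{\ECE}_B(g\circ f)$ to $w$ times $\sup_{B}\widehat{\ECE}^{D_{-}}_{B}(g_m\circ f)$. This requires a genericity argument (no accepted instance sitting at confidence exactly $1/k$, which holds since $g_m\in\mathcal G_a$ forces its top coordinate above $1/k$ for instances with a unique original prediction) together with the same triangle/Minkowski inequality used in the proof of \Cref{prop:acc-pre-ece-lower-bound} to bound bins of $D_{-}$ that mix correctly- and incorrectly-classified instances. Everything downstream is bookkeeping with the Type I and Type II error counts of $\hat\phi$.
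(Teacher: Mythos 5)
Your decomposition into the accepted set $D_-$ and the rejected set $D_+$, the count $|D_-|/n=(1-\hat R_0(\hat\phi))\hat\pi_0+\hat R_1(\hat\phi)(1-\hat\pi_0)=w$, the observation that the bin at confidence $1/k$ contributes zero under uniform tie-breaking (as in \Cref{prop:ece-naive-binary-classifier}), and the rescaling $\sup_B\widehat{\ECE}_B(g\circ f)\ge w\,\sup_B\widehat{\ECE}^{D_-}_B(g_m\circ f)$ are all sound. The gap is in the final step, which you dismiss as bookkeeping: applying \Cref{prop:acc-pre-ece-lower-bound} on $D_-$ gives $\sup_B\widehat{\ECE}^{D_-}_B(g_m\circ f)>(1-\hat\pi_0^{D_-})/k$, where $\hat\pi_0^{D_-}$ is the accuracy of $f$ \emph{on $D_-$}. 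By your own counts, $1-\hat\pi_0^{D_-}=\hat R_1(\hat\phi)(1-\hat\pi_0)/w$, so the weight $w$ cancels and your argument delivers
\[
\sup_{B\in\mathcal{B}}\widehat{\ECE}_B(g\circ f)\;>\;\frac{\hat R_1(\hat\phi)\,(1-\hat\pi_0)}{k},
\]
not $w(1-\hat\pi_0)/k$. These coincide only when $\hat R_0(\hat\phi)+\hat R_1(\hat\phi)=1$; whenever $\hat\phi$ is better than random one has $w>\hat R_1(\hat\phi)$, so what you actually prove is strictly weaker than the stated bound. Closing the gap cannot be done by sharpening the localization alone: the missing mass $(1-\hat R_0(\hat\phi))\hat\pi_0(1-\hat\pi_0)/k$ would have to be contributed by the correctly classified accepted instances, whose per-bin contribution $(1-z_b)$ has no positive lower bound, so reaching the claimed constant requires an additional idea rather than substitution.

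A minor further point: $w<1$ is equivalent to $\hat\phi$ rejecting at least one instance ($D_+\neq\emptyset$), not to $\hat\phi$ erring on at least one instance; a perfect $\hat\phi$ applied to an imperfect $f$ still yields $w=\hat\pi_0<1$.
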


The result in~\Cref{prop:meta-cal-ece-lower-bound} should be compared to the lower bound shown in~\Cref{prop:acc-pre-ece-lower-bound}. It can be seen that the calibration map constructed by Meta-Cal has an improved calibration error lower bound, compared with its accuracy-preserving base calibrator.~\Cref{prop:meta-cal-ece-lower-bound} should also be compared with~\Cref{prop:ece-naive-binary-classifier}. From~\Cref{eq:lower-bound-of-meta-cal}, using the decomposition $\frac{w}{k} = \frac{(1 - \hat{R}_0(\hat{\phi})) \hat{\pi}_0}{k} + \frac{\hat{R}_1(\hat{\phi}) (1 - \hat{\pi}_0)}{k}$, we see the influence of the \mbox{Type II error} has been effectively diluted. This is desirable since we only have control over the \mbox{Type II error} of $\hat{\phi}$.

At the same time, the miscoverage rate of our constructed calibrator is well-controlled. In~\Cref{sec:experiment}, we empirically show that Meta-Cal outperforms other competing methods in terms of the empirical estimation of ECE.~\Cref{algo:meta-cal-miscoverage-control} sketches the construction of Meta-Cal under a miscoverage rate constraint. For details of this construction, please see the first paragraph of~\Cref{sec:method:miscoverage-control}.

\begin{algorithm}
\caption{Meta-Cal (miscoverage control)}
\label{algo:meta-cal-miscoverage-control}
\begin{algorithmic}[1]
\STATE {\bfseries Input:} Training data set $\{ (x_i,y_i) \}_{i=1}^{n}$, miscoverage rate tolerance $\alpha$, base calibration model $g_m$, ranking model $h$.
\STATE {\bfseries Output:} Binary classifier $\hat{\phi}$, Meta-Cal calibration model $g$.
% \STATE Train a ranknet $s(\cdot)$ on $\bm{X}$ and $\bm{y}$ 
\STATE Partition the training data set randomly into two parts. The first part has only negative ($\hat{Y}=Y$) samples. The second part contains both negative and positive samples ($\hat{Y} \neq Y$).
\STATE Compute ranking scores on the first part using the ranking model $h$. Compute threshold $r^*$ based on $\alpha$.
\STATE Construct a binary classifier $\hat{\phi}$ based on $r^*$.
\STATE Train a base calibration model $g_m$ using samples whose scores are smaller than $r^*$ among the second part.
\STATE Construct the final calibration map $g$ using updated rules.
\end{algorithmic}
\end{algorithm}

\subsection{Coverage Accuracy Control}
\label{sec:method:coverage-accuracy-control}
% \note{among all calibrators with this controoled , our calibrator has lowest miscoverage rate. This section is going to be bery long, lots of results need to be presented.}

In some applications, we are more interested in controlling the coverage accuracy of a calibration map instead of its miscoverage rate. 
% A practical example is a quality assurance (QA) system. The following two properties are desirable: (i) for products accepted by the QA system, their quality requirements are satisfied with high confidence, and (ii) the QA system does not perform unnecessary rejection. The first point corresponds to the coverage accuracy of a calibration map and the second point corresponds to the miscoverage rate. 
To construct a calibration map with a controlled coverage accuracy, similar to~\Cref{sec:method:miscoverage-control}, we need to build a suitable binary classifier.
It will be convenient to introduce the concept of a calibrated binary CPE model~\cite{narasimhan2013}. 
% , which is a special case of a perfectly calibrated model in the binary setting. 

% \begin{definition}[Optimal CPE transform~\cite{narasimhan2013}]
% Let $h: \mathcal{X} \to [a,b]$ (where $a,b \in \mathbb{R}$, $a < b$) be any bounded-range ranking model. Assume a function $\eta_h: [a,b] \to [0,1]$ defined as $\eta_h(s) = \mathbb{P}(\epsilon=1 | h(x) = s)$ is a square-integrable w.r.t. the induced density of $h(X)$. The optimal CPE transform $t: \mathbb{R} \to [0,1]$ is defined as any monotonically increasing function such that $t \circ h$ yields minimum squared error.
% \end{definition}

\begin{definition}[Calibrated binary CPE model~\cite{narasimhan2013}]
\label{def:calibrated-cpe-model}
A binary class probability estimation (CPE) model $\hat{\eta}: \mathcal{X} \to [0,1]$ is said to be calibrated w.r.t. a probability $\mathbb{P}(X,\epsilon)$ on $\mathcal{X} \times \{-1, +1 \}$ if:
\begin{equation*}
    \mathbb{P}(\epsilon=1 \mid \hat{\eta}(x) = u) = u, \forall u \in \text{range}(\eta)
\end{equation*}
where $\text{range}(\hat{\eta})$ denotes the range of $\hat{\eta}$.
\end{definition}

We note a calibrated binary CPE model is different from a perfectly calibrated model (\Cref{def:calibration-err}) in the binary setting. Before we state our bound for the coverage accuracy, we present two existence propositions.

\begin{proposition}[Existence of a monotonically increasing CPE transformation]
\label{prop:existence-mono-inc-cpe}
Under~\Cref{assump:h-continuous,assump:eta-square-integral}, let $h: \mathcal{X} \to [a,b]$ (where $a,b \in \mathbb{R}$, $a < b$) be any bounded-range ranking model. There exists a monotonically increasing function $t: \mathbb{R} \to [0,1]$ such that $t \circ h$ resulting from composing $t$ and $h$ is a calibrated binary CPE model.
\end{proposition}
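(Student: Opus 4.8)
The plan is to realize $t$ as the \emph{isotonic regression} of the conditional-positive-probability function of $h$, made concrete as the slope of a greatest convex minorant. Write $S=h(X)$; by \Cref{assump:h-continuous} the law of $S$ has a continuous cdf $F$ supported in $[a,b]$, and $\eta_h(s)=\mathbb{P}(\epsilon=1\mid h(X)=s)\in[0,1]$ is, by \Cref{assump:eta-square-integral}, square-integrable — hence integrable — against the law of $S$, so $\pi=\mathbb{P}(\epsilon=1)=\mathbb{E}[\eta_h(S)]$ is well defined. I would reparametrize by the quantile $p=F(s)\in[0,1]$: set $\tilde\eta(p)=\eta_h\!\left(F^{-1}(p)\right)$ and $P(p)=\int_0^p\tilde\eta(v)\,dv=\mathbb{E}\!\left[\eta_h(S)\,\mathds{1}(F(S)\le p)\right]$, so that $P$ is nondecreasing and $1$-Lipschitz on $[0,1]$ with $P(0)=0$ and $P(1)=\pi$. (Continuity of $F$ makes $F(S)$ uniform on $[0,1]$, which is the intuition behind this change of variables.)

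Next I would let $\check P$ be the greatest convex minorant of $P$ on $[0,1]$, define $\tau\colon[0,1]\to[0,1]$ to be its right derivative, and set $t(s)=\tau(F(s))$ on $[a,b]$, extended to all of $\mathbb{R}$ by the constants $\tau(0)$ below $a$ and $\tau(1)$ above $b$. This is well posed: $\check P$ dominates the constant minorant $P(0)=0$, which together with convexity forces $\check P$ to be nondecreasing, and since the affine pieces of $\check P$ are chords of the nondecreasing $1$-Lipschitz curve $P$, the slope $\tau$ is nondecreasing with values in $[0,1]$; composing with the nondecreasing $F$, the map $t$ is monotone (nondecreasing — strict monotonicity is impossible in general, since it would force pointwise calibration $\eta_h=t$) and $[0,1]$-valued, as a CPE model must be. The crucial point is then that $\tau$ (equivalently $t$) is precisely the isotonic regression of $\eta_h$, i.e.\ the projection of $\eta_h$ onto the closed convex set of nondecreasing functions in $L^2$ of the law of $S$: this is the classical identification of isotonic regression with the convex-minorant slope of the cumulative diagram, and it delivers the self-consistency identity $\mathbb{E}\!\left[\eta_h(S)\mid\sigma(t(S))\right]=t(S)$ almost surely.

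To conclude I would read off \Cref{def:calibrated-cpe-model} from self-consistency by a short case analysis on $u\in\mathrm{range}(t\circ h)$. If the level set $\{s:t(s)=u\}$ corresponds to a maximal affine piece $[p_1,p_2]$ of $\check P$, then its endpoints are contact points ($\check P(p_i)=P(p_i)$), so $\int_{p_1}^{p_2}\tilde\eta=P(p_2)-P(p_1)=u\,(p_2-p_1)$ and the average of $\eta_h$ over that block is $u$; if the level set is a single point, then $\check P$ agrees with $P$ there and $\tau$ equals the local slope $\tilde\eta$, so the identity again reads $u=u$. Undoing $p=F(s)$ gives $\mathbb{P}(\epsilon=1\mid t(h(X))=u)=u$, and $t$ is monotone by construction. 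I expect the main obstacle to be exactly the singleton-level-set case: there $\mathbb{P}(\epsilon=1\mid t(h(X))=u)$ conditions on a null event and must be interpreted through the disintegration of $\mathbb{P}$ along $h(X)$ and reconciled with the a.e.-defined $\eta_h$ (the positive-probability blocks being clean thanks to the contact-point argument). A shortcut that avoids this bookkeeping is to invoke \citet{narasimhan2013} directly, whose hypotheses are exactly \Cref{assump:h-continuous,assump:eta-square-integral}.
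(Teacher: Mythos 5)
Your construction is correct in substance, but it is worth noting that the paper's own ``proof'' is a pure citation: it simply combines \citet[Definition 12]{narasimhan2013} (the \emph{calibrated version} of a ranking function) with \citet[Lemma 13]{narasimhan2013}, which asserts exactly that this calibrated version is a calibrated CPE model. What you have written out --- reparametrizing by the quantile $p=F(s)$, forming the cumulative diagram $P$, taking the slope $\tau$ of its greatest convex minorant, and identifying $\tau\circ F$ with the isotonic regression of $\eta_h$ onto nondecreasing functions in $L^2$ of the law of $h(X)$ --- is precisely the construction underlying that cited lemma, so you have not taken a different route so much as unpacked the one the paper delegates. Your block analysis for level sets of positive probability (contact points of the minorant giving $\int_{p_1}^{p_2}\tilde\eta = u\,(p_2-p_1)$) is sound, and you correctly identify the only genuinely delicate point: on singleton level sets the event $\{t(h(X))=u\}$ is null, so \Cref{def:calibrated-cpe-model} must be read through a disintegration of $\mathbb{P}$ along $h(X)$ and a choice of version of $\eta_h$; this is exactly the bookkeeping that \citet{narasimhan2013} absorb into their Definition 12, and your closing remark that one can fall back on their result under \Cref{assump:h-continuous,assump:eta-square-integral} is the honest resolution. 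The net trade-off: the paper's proof is two lines but opaque, while yours is self-contained and makes visible where square-integrability and continuity of the law of $h(X)$ are actually used, at the cost of leaving the null-set conditioning semi-formal.
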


\begin{proof}
The existence of such a monotonically increasing CPE transformation directly follows by combining the results of \citet[Lemma 13]{narasimhan2013} and \citet[Definition 12]{narasimhan2013}.
\end{proof}

\begin{proposition}[Existence of a monotonically decreasing coverage accuracy transformation]
\label{prop:existence-mono-dec-cov-acc}
Under~\Cref{assump:h-continuous,assump:eta-square-integral}, let $h: \mathcal{X} \to [a,b]$ (where $a,b \in \mathbb{R}$, $a < b$) be any bounded-range ranking model. There exists a monotonically decreasing function $l: \mathbb{R} \to [0,1]$ such that $l(s)$ is the coverage accuracy of a calibration map $g$ which is constructed using the binary classifier $\hat{\phi}(x) = 2 \cdot \mathds{1}(h(x) > s) - 1$.
\end{proposition}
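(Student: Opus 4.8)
The plan is to reduce the claim to a transparent monotonicity property of a \emph{calibrated} binary CPE model and then to transfer that property back through the monotone transformation supplied by~\Cref{prop:existence-mono-inc-cpe}. First I would unwind the definitions. For a fixed threshold $s$ the classifier $\hat{\phi}(x)=2\cdot\mathds{1}(h(x)>s)-1$ covers exactly the points with $h(x)\le s$, so by~\Cref{def:coverage-accuracy} the coverage accuracy of the associated calibration map $g$ is the fraction of correctly classified instances among the covered ones,
\[
l(s) \;=\; \mathbb{P}(\epsilon=-1 \mid h(X)\le s) \;=\; 1-\mathbb{E}[\,\eta_h(h(X)) \mid h(X)\le s\,],
\]
where the second equality is the tower property with $\eta_h$ as in~\Cref{assump:eta-square-integral}, and where we put $l(s)=1$ for $s$ below the range of $h(X)$ (this convention is irrelevant to monotonicity). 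This already exhibits a function $l:\mathbb{R}\to[0,1]$ realizing the coverage accuracy; what remains is to show it is non-increasing. Note a direct argument from the last display would need $\eta_h$ to be non-decreasing, which the assumptions do not grant, so a detour through a calibrated CPE model is needed.

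Then I would invoke~\Cref{prop:existence-mono-inc-cpe} to obtain a monotonically increasing $t:\mathbb{R}\to[0,1]$ such that $\hat{\eta}:=t\circ h$ is a calibrated binary CPE model (\Cref{def:calibrated-cpe-model}). Because $t$ is non-decreasing, the covered set $\{h(X)\le s\}$ coincides, up to a set of probability zero, with $\{\hat{\eta}(X)\le t(s)\}$, and hence $l(s)=1-\mathbb{P}(\epsilon=1\mid\hat{\eta}(X)\le t(s))$. Applying the tower property with the $\sigma$-algebra generated by $\hat{\eta}(X)$ together with the calibration identity $\mathbb{P}(\epsilon=1\mid\hat{\eta}(X))=\hat{\eta}(X)$ yields
\[
\mathbb{P}(\epsilon=1\mid\hat{\eta}(X)\le u) \;=\; \mathbb{E}[\,\hat{\eta}(X)\mid\hat{\eta}(X)\le u\,].
\]

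Finally I would show that $u\mapsto\mathbb{E}[\hat{\eta}(X)\mid\hat{\eta}(X)\le u]$ is non-decreasing. For $u_1<u_2$ with positive mass below $u_1$, split $\{\hat{\eta}(X)\le u_2\}$ into $\{\hat{\eta}(X)\le u_1\}$ and $\{u_1<\hat{\eta}(X)\le u_2\}$ and express the truncated mean at $u_2$ as the convex combination of $\mathbb{E}[\hat{\eta}(X)\mid\hat{\eta}(X)\le u_1]$ and $\mathbb{E}[\hat{\eta}(X)\mid u_1<\hat{\eta}(X)\le u_2]$; the second term is at least $u_1$, which in turn is at least the first term, so the combination dominates $\mathbb{E}[\hat{\eta}(X)\mid\hat{\eta}(X)\le u_1]$. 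Composing this non-decreasing map with the non-decreasing $t$ shows that $l(s)=1-\mathbb{E}[\hat{\eta}(X)\mid\hat{\eta}(X)\le t(s)]$ is non-increasing in $s$, as claimed.

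The step I expect to be the main obstacle is the passage from $\{h(X)\le s\}$ to $\{\hat{\eta}(X)\le t(s)\}$: one must rule out that the calibration transform $t$ collapses a positive-probability interval of $h$-scores into a single value, since otherwise the two covered sets differ and monotonicity of $l$ is not automatic. This is exactly where~\Cref{assump:h-continuous} and the precise form of the transform constructed in~\citet{narasimhan2013} must be used; everything else is elementary measure theory.
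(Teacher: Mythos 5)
Your proof is correct, and it follows the same overall route as the paper's: both pass through \Cref{prop:existence-mono-inc-cpe} to obtain a monotone $t$ with $\hat{\eta}=t\circ h$ a calibrated CPE model, and both reduce the coverage accuracy to $1-\mathbb{P}(\epsilon=1\mid \hat{\eta}(X)\le t(s))$. You diverge at the decisive monotonicity step, however, and your version is the more careful one. The paper evaluates $\mathcal{E}_h(s)=\mathbb{P}(\epsilon=1\mid t(h(x))<t(s))$ as the unnormalized integral $\int_c^{t(s)}u\,du=\tfrac12 t(s)^2-\text{constant}$ and reads off monotonicity from $t$ being nonnegative and increasing; as written this implicitly treats the law of $\hat{\eta}(X)$ as Lebesgue measure and omits the normalizing factor $\mathbb{P}(\hat{\eta}(X)<t(s))$, so it is really a heuristic valid verbatim only for a uniform score distribution. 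You instead prove the general fact that the lower-truncated mean $u\mapsto\mathbb{E}\left[\hat{\eta}(X)\mid\hat{\eta}(X)\le u\right]$ is non-decreasing, via the convex-combination decomposition over $\{\hat{\eta}\le u_1\}$ and $\{u_1<\hat{\eta}\le u_2\}$; this requires nothing about the law of $\hat{\eta}(X)$ beyond what \Cref{assump:h-continuous,assump:eta-square-integral} already provide, and it delivers the stated conclusion for any calibrated CPE model. The obstacle you flag at the end --- that $t$ must not collapse a positive-probability interval of $h$-scores, else $\{h(X)\le s\}$ and $\{\hat{\eta}(X)\le t(s)\}$ genuinely differ --- is a real issue, but it is equally present and unaddressed in the paper's own first displayed identity $\mathcal{E}_h(s)=\mathbb{P}(\epsilon=1\mid t(h(x))<t(s))$, which silently assumes $t$ is strictly increasing on the support of $h(X)$; you have not introduced a new gap so much as made an implicit hypothesis of the published proof explicit.
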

\begin{proof}
From~\Cref{prop:existence-mono-inc-cpe}, there exists a monotonically increasing $t$ such that $t \circ h$ is a calibrated binary CPE model. 
Let $\mathcal{E}_h(s) = \mathbb{P}(\epsilon=1 \mid h(x) < s)$, since $t$ is monotonically increasing, we have:
\begin{equation*}
\mathcal{E}_h(s) = \mathbb{P}(\epsilon = 1 \mid t(h(x)) < t(s)).
\end{equation*}
Because $t \circ h$ is a calibrated binary CPE model, using~\Cref{def:calibrated-cpe-model}, we further have:
\begin{equation*}
\mathcal{E}_h(s) = \int_{c}^{t(s)} u du = \frac{1}{2} t(s)^2 - \text{constant}
\end{equation*}
where $c$ is a constant.

From~\Cref{def:coverage-accuracy}, the desired coverage accuracy transformation is:
\begin{equation*}
    l(s) = 1 - \mathcal{E}_h(s) = \text{constant} - \frac{1}{2} t(s)^2.
\end{equation*}
Since $t(s) \ge 0, \forall s \in \mathbb{R}$ and $t(\cdot)$ is a monotonically increasing function, it is obvious that $l$ is a monotonically decreasing function. From the definition of $\mathcal{E}_h(s)$, the binary classifier that is used to construct $g$ is exactly $\hat{\phi}(x) = 2 \cdot \mathds{1}(h(x) > s) - 1$.
\end{proof}

Based on the above two existence propositions, we describe a bound for the empirical coverage accuracy in~\Cref{prop:risk-control}.

\begin{table*}

\caption{\label{tab:ece-comparison}ECE comparison.  \textit{Uncal}, \textit{TS}, \textit{ETS}, \textit{GPC}, \textit{MetaMis}, \textit{MetaAcc}  %in~\Cref{tab:ece-comparison} 
denote no-calibration, temperature scaling, ensemble temperature scaling, Gaussian Process calibration, Meta-Cal under miscoverage rate constraint and Meta-Cal under coverage accuracy constraint respectively. Reported values are the average of 40 independent runs. All standard errors are less than $5e-4$.}
\centering
\begin{tabular}[t]{>{}lllllllll}
\toprule
\textbf{Dataset} & \textbf{Network} & \textbf{Acc} & \textbf{Uncal} & \textbf{TS} & \textbf{ETS} & \textbf{GPC} & \textbf{MetaMis} & \textbf{MetaAcc}\\
\midrule
 & DenseNet40 & 0.9242 & 0.05105 & 0.00510 & 0.00567 & 0.00634 & 0.00434 & 0.00355\\
\cmidrule{2-9}
 & ResNet110 & 0.9356 & 0.04475 & 0.00781 & 0.00809 & 0.00684 & 0.00391 & 0.00441\\
\cmidrule{2-9}
 & ResNet110SD & 0.9404 & 0.04022 & 0.00439 & 0.00509 & 0.00364 & 0.00350 & 0.00315\\
\cmidrule{2-9}
\multirow{-4}{*}{\raggedright\arraybackslash \textbf{CIFAR10}} & WideResNet32 & 0.9393 & 0.04396 & 0.00706 & 0.00712 & 0.00684 & 0.00485 & 0.00532\\
\cmidrule{1-9}
 & DenseNet40 & 0.7000 & 0.21107 & 0.01067 & 0.01104 & 0.01298 & 0.01093 & 0.00793\\
\cmidrule{2-9}
 & ResNet110 & 0.7148 & 0.18182 & 0.02037 & 0.02130 & 0.01348 & 0.01815 & 0.01441\\
\cmidrule{2-9}
 & ResNet110SD & 0.7283 & 0.15496 & 0.01043 & 0.01057 & 0.01265 & 0.01109 & 0.00733\\
\cmidrule{2-9}
\multirow{-4}{*}{\raggedright\arraybackslash \textbf{CIFAR100}} & WideResNet32 & 0.7382 & 0.18425 & 0.01332 & 0.01351 & 0.00993 & 0.01332 & 0.01189\\
\cmidrule{1-9}
 & DenseNet161 & 0.7705 & 0.05531 & 0.02053 & 0.02064 & NA & 0.01388 & 0.01248\\
\cmidrule{2-9}
\multirow{-2}{*}{\raggedright\arraybackslash \textbf{ImageNet}} & ResNet152 & 0.7620 & 0.06290 & 0.02023 & 0.02004 & NA & 0.01360 & 0.01138\\
\bottomrule
\end{tabular}
\end{table*}

\begin{proposition}[Coverage accuracy control]
\label{prop:risk-control}
Under~\Cref{assump:h-continuous,assump:eta-square-integral}, the empirical coverage accuracy $\hat{F}_1(g)$ of $g$ on a finite test data set of size $m$ that is drawn  i.i.d.\ from $\mathbb{P}(X,Y)$ is well-controlled:

\begin{align}
\mathbb{P}\left(\left| \hat{F}_{1}(g) - \beta \right| \ge \delta\right) & \approx \mathbb{P}\left(\left| R_{1} - \beta \right| \ge \delta\right) \nonumber\\
& \le 2 \exp \left( -\frac{m_{1} \delta^{2}}{2 \beta (1 - \beta)}  \right),
\end{align}
where $\beta$ is the desired coverage accuracy, the random variable $R_1$ has a Normal distributon $\mathcal{N}\left(\beta, \frac{\beta (1-\beta)}{m_1}\right)$, and $m_1 \le m$ is the number of samples whose ranking scores are smaller than $l^{-1}(\beta)$ in the test data set. 

Further, the desired binary classifier is:
\begin{equation}
\label{eq:risk-control-binary-classifier}
    \hat{\phi}(x) = 2 \cdot \mathds{1}(h(x) > l^{-1}(\beta)) - 1.
\end{equation}
\end{proposition}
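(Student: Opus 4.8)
The plan is to combine the existence of the monotonically decreasing coverage‑accuracy transformation $l$ from~\Cref{prop:existence-mono-dec-cov-acc} with a Gaussian tail bound applied to the normal approximation of a sample proportion. First I would fix the binary classifier. By~\Cref{prop:existence-mono-dec-cov-acc}, $l:\mathbb{R}\to[0,1]$ is monotonically decreasing, and by its construction $l(s)=\text{const}-\tfrac12 t(s)^2$ with $t$ continuous it is also continuous; hence $l$ is invertible on its range, and for any target $\beta\in\text{range}(l)$ the threshold $s^{\ast}=l^{-1}(\beta)$ is well defined. Composing with $\hat{\phi}(x)=2\cdot\mathds{1}(h(x)>s^{\ast})-1$ then yields a calibration map $g$ whose population coverage accuracy is exactly $l(s^{\ast})=\beta$. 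This is precisely~\eqref{eq:risk-control-binary-classifier}, so that half of the statement is immediate once~\Cref{prop:existence-mono-dec-cov-acc} is in hand.

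Next I would translate the empirical coverage accuracy into a sample proportion. On the test set of size $m$, the covered instances are those with $\hat{\phi}(x)=-1$, i.e.\ $h(x)\le s^{\ast}$; by~\Cref{assump:h-continuous} the event $h(X)=s^{\ast}$ has probability zero, so the covered set coincides almost surely with $\{h(x)<s^{\ast}\}$ and has cardinality $m_1$. Conditioning on $m_1$, the test points are i.i.d., and by~\Cref{def:coverage-accuracy} together with the identity $\mathcal{E}_h(s^{\ast})=\mathbb{P}(\epsilon=1\mid h(X)<s^{\ast})$ derived in the proof of~\Cref{prop:existence-mono-dec-cov-acc}, a covered instance is correctly classified ($\epsilon=-1$, i.e.\ $\hat{Y}=Y$) with probability $1-\mathcal{E}_h(s^{\ast})=l(s^{\ast})=\beta$. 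Hence the number of correctly classified covered instances is $\mathrm{Binomial}(m_1,\beta)$, and $\hat{F}_1(g)$ is the mean of $m_1$ i.i.d.\ $\mathrm{Bernoulli}(\beta)$ variables, with expectation $\beta$ and variance $\sigma^2=\beta(1-\beta)/m_1$.

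Finally I would invoke the central limit theorem to replace $\hat{F}_1(g)$ by $R_1\sim\mathcal{N}(\beta,\sigma^2)$ — this is the source of the ``$\approx$'' in the statement — and then apply the standard Chernoff bound for a Gaussian, $\mathbb{P}(|R_1-\beta|\ge\delta)=2\bigl(1-\Phi(\delta/\sigma)\bigr)\le 2\exp\!\bigl(-\delta^2/(2\sigma^2)\bigr)$, which with $\sigma^2=\beta(1-\beta)/m_1$ is exactly the claimed $2\exp\!\bigl(-m_1\delta^2/(2\beta(1-\beta))\bigr)$.

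The main obstacle is a matter of rigor rather than computation: $m_1$ is itself a data‑dependent random variable yet appears inside the bound, so the clean reading is that everything after the classifier is fixed is stated conditionally on $m_1$, and the normal approximation (hence ``$\approx$'') absorbs both the discreteness of the binomial and the fluctuation of $m_1$; a fully rigorous version would need either conditioning on $m_1$ throughout or a Berry--Esseen‑type control. One also has to ensure $\beta\in\text{range}(l)$ so that $l^{-1}(\beta)$ exists, and that the per‑instance identity $\mathbb{P}(\epsilon=-1\mid h(X)<s^{\ast})=\beta$ is legitimately inherited from the population coverage accuracy defined through $l$ — both of which are delivered by~\Cref{prop:existence-mono-inc-cpe,prop:existence-mono-dec-cov-acc}, and these in turn rely on~\Cref{assump:h-continuous,assump:eta-square-integral}.
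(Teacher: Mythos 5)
Your proposal is correct and follows essentially the same route as the paper: identify the covered test points as the $m_1$ samples with $h(x)<l^{-1}(\beta)$, observe that each is correctly classified independently with probability $l(l^{-1}(\beta))=\beta$ so that $\hat{F}_1(g)$ is a Binomial proportion with mean $\beta$ and variance $\beta(1-\beta)/m_1$, invoke the CLT for the ``$\approx$'' step, and finish with the Gaussian tail bound. Your closing remarks on the randomness of $m_1$ and the need for $\beta\in\text{range}(l)$ are legitimate rigor caveats that the paper's own argument also implicitly absorbs into the normal approximation.
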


\begin{algorithm}[t]
\caption{Meta-Cal (coverage accuracy control)}
\label{algo:meta-cal-coverage-accuracy-control}
\begin{algorithmic}[1]
\STATE {\bfseries Input:} Training data set $\{ (x_i,y_i) \}_{i=1}^{n}$, desired coverage accuracy $\beta$, base calibration model $g_m$, ranking model $h$.
\STATE {\bfseries Output:} Binary classifier $\hat{\phi}$, Meta-Cal calibration model $g$.
\STATE Randomly split the training data set into two parts.
\STATE Estimate the coverage accuracy transformation $\hat{l}$ on the first part.
\STATE Compute a threshold $r^*=\hat{l}^{-1}(\beta)$ based on the estimated $\hat{l}$ and $\beta$.
\STATE Construct a binary classifier $\hat{\phi}$ based on $r^*$.
\STATE Train a base calibration model $g_m$ using samples among the second part whose scores are smaller than $r^*$.
\STATE Construct the final calibration map $g$ using updated rules.
\end{algorithmic}
\end{algorithm}

It should be noted that among all calibration maps whose coverage accuracy is higher than $\beta$, the calibration map constructed using the binary classifier in~\Cref{eq:risk-control-binary-classifier} has the smallest miscoverage rate.
In the following, we describe how to estimate the monotonically decreasing function $l$ in practice. Given a finite calibration data set $\{(x_i,y_i)\}_{i=1}^{n}$ that is drawn i.i.d.\ from the joint distribution $\mathbb{P}(X,Y)$ on $\mathcal{X} \times \mathcal{Y}$: 
Firstly the ranking scores $\{ r_i \}_{i=1}^{n}$, where $r_i = h(x_i)$, are computed using our ranking model $h$.
Then a set of bins $\{ I_1,\cdots,I_b \}$ to partition $\{ r_i \}_{i=1}^{n}$ is constructed through uniform mass binning~\cite{zadrozny2001a}. For a given $j \in \{ 1,\cdots,b \}$, let $l_j^{(s)}$ and $l_j^{(a)}$ denote the average of ranking scores on $I_j$ and $f$'s accuracy  on $\{ I_1,\cdots,I_j \}$ respectively. 
Finally a decreasing isotonic regression model is fitted on $\{ (l_j^{(s)}, l_j^{(a)}) \}_{j=1}^{b}$ and the fitted model is an estimation of $l$. ~\Cref{algo:meta-cal-coverage-accuracy-control} sketches the construction of Meta-Cal under the coverage accuracy constraint. 

Several remarks are made to conclude this section. 
\begin{inparaenum}[(i)]
\item The miscoverage rate bound holds for all base calibration maps. The coverage accuracy bound requires the base calibrator to preserve the accuracy. 
\item These two bounds do not depend on the metric used to evaluate the level of calibration, no matter whether it is top-label calibration error (which we use in this paper) or marginal calibration error~\cite{kumar2019,kull2019}.
\item To ensure the independence assumption, the training data of Meta-Cal should be different from the data set used to train the multi-class classifier.
\end{inparaenum}

\section{Experiments}
\label{sec:experiment}
% \note{extension: cost-sensitive classification}
% \todo{use a binary classifier directly}
% \todo{To make the above assumption satisfied, it is tempting to train a probabilistic binary classifier and use its score as the underlying score function. 
% We empirically show a binary classifier score is inefficient and has a high type-II error. 
% }

In this section, we present empirical results on a set of multi-class classifiers. Firstly, we compare our our proposed Meta-Cal described in~\Cref{sec:method} with several baselines. Then we validate the two constraints investigated in this work are well satisfied in~\Cref{sec:exp:verifying-constraints}.

\begin{figure*}[ht]
\centering
\includegraphics[width=\linewidth]{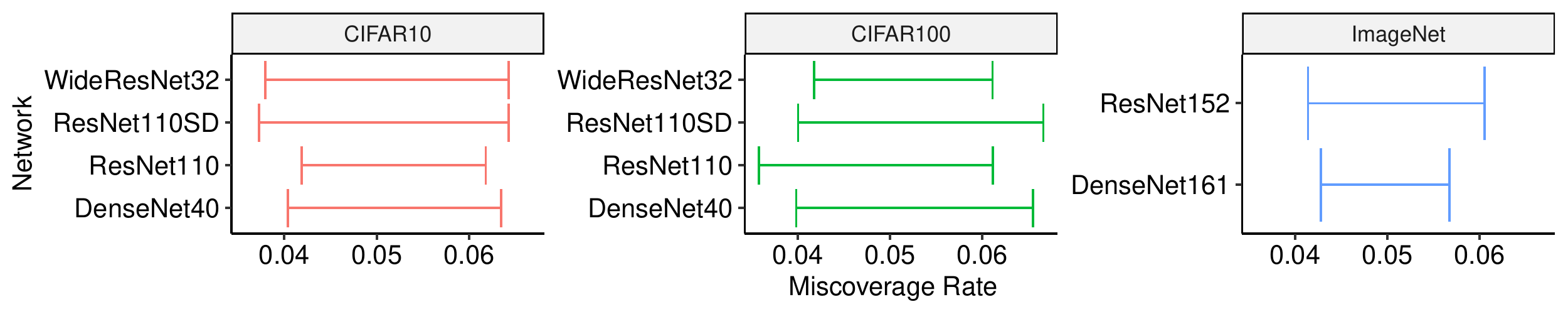}
\caption{Empirical miscoverage rate. The error bars show $\pm 2$ standard deviation of 40 independent runs. The desired miscoverage rates for CIFAR-10, CIFAR-100 and ImageNet are all set to be $0.05$.}
\label{fig:verify-miscoverage-rate}
\end{figure*}

\begin{figure*}[ht]
\centering
\includegraphics[width=\linewidth]{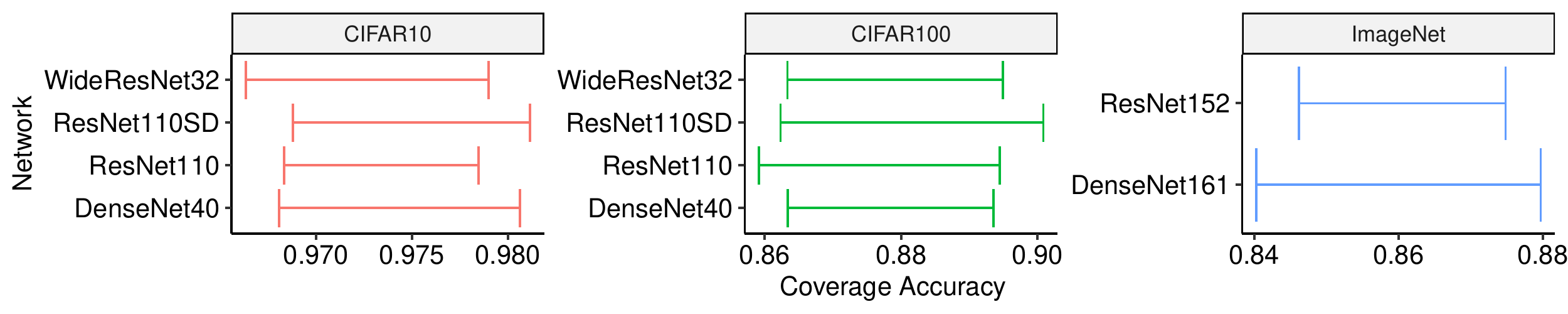}
\caption{Empirical coverage accuracy. The error bars show $\pm 2$ standard deviation of 40 independent runs. The desired coverage accuracy for CIFAR-10, CIFAR-100 and ImageNet are set to be $0.97$, $0.87$ and $0.85$, respectively.}
\label{fig:verify-coverage-accuracy}
\end{figure*}

\subsection{Calibrating Neural Networks}
\label{sec:exp:calibrate-nn}

In this section, we compare our approach with other methods of post-hoc multi-class calibration. 
Following previous works~\cite{guo2017,kull2019,zhang2020,wenger2020}, we report calibration results on several neural network classifiers trained on CIFAR-10, CIFAR-100~\cite{Krizhevsky09learningmultiple} and ImageNet~\cite{deng2009imagenet}.
For CIFAR-10 and CIFAR-100, the following networks are used: DenseNet~\cite{huang2016}, ResNet~\cite{he2015}, ResNet with stochastic depth~\cite{huang2016b}, WideResNet~\cite{zagoruyko2016}. 45000 out of 60000 images are used for training these classifiers. The remaining 15000 images are held out for training and evaluating post-hoc calibration methods. The training details are given in~\Cref{supp:train-detials}. These 15000 samples are randomly split into 5000/10000 samples to train and evaluate a post-hoc calibration method.
For ImageNet, we use pre-trained DenseNet-161 and ResNet-152 from PyTorch~\cite{paszke2019}. 50000 images in the validation set are used for training and evaluating post-hoc calibration methods. To train and test a calibration map, we randomly split these samples into 25000/25000 images.

% For CIFAR-10 and CIFAR-100, 15000 samples are randomly split into 5000/10000 samples for training and evaluating a post-hoc calibration method in each run.
% For ImageNet, 50000 samples are randomly split into 25000/25000 samples for training and evaluating.

The following post-hoc algorithms are used for comparison: temperature scaling~\cite{guo2017}, ensemble temperature scaling~\cite{zhang2020}, Gaussian Process calibration~\cite{wenger2020}. 
The Dirichlet calibration~\cite{kull2019} is not included in our empirical comparison, since on the neural network classifiers we considered in this paper, its performance is consistently worse than other methods~\cite{zhang2020}.

To construct a calibration map using our proposed Meta-Cal, firstly we need to decide which base calibrator to use. Throughout the experimental part of this paper, we use temperature scaling since: (i) it is the most computationally efficient approach among the above baselines, and (ii) it can be seen from~\Cref{prop:meta-cal-ece-lower-bound} that Meta-Cal augments the performance of temperature scaling, since it is an accuracy-preserving calibration map.
Secondly a ranking model is required to construct a desired binary classifier based on different constraints. In this paper, we define the ranking model as the entropy of the output of an uncalibrated probabilistic classifier, that is, given $X \in \mathcal{X}$, we define $h(X)=-\sum_{i \in [k]} f(X)_i \log f(X)_i$. An alternative way is to learn a ranking model using a consistent ranking algorithm~\cite{clemencon2008} on a separate data set. 
A potential advantage of this approach is the constructed binary classifier has a lower \mbox{Type II error}, thus from~\Cref{prop:ece-naive-binary-classifier}, the worst-case estimation of expected calibration error can be improved. A disadvantage of this approach is a separate data set is required to learn a ranking model and this effectively means we have less samples to train a calibration model.

The experimental configurations specific to our proposed approach are as follows. For Meta-Cal under the miscoverage rate constraint, we set the miscoverage rate tolerance to be $0.05$ for all neural network classifiers and all data sets used in the experiments. For Meta-Cal under the coverage accuracy constraint, we set the desired coverage accuracy to be $0.97$, $0.87$, $0.85$ for CIFAR-10, CIFAR-100 and ImageNet, respectively. For reference, the original accuracy for each configuration is shown in the third column in~\Cref{tab:ece-comparison}.
In both settings, we randomly select $1/10$ samples (up to 500 samples) from the calibration data set to construct a binary classifier or estimate the coverage accuracy transformation function.

\begin{table*}

\caption{\label{tab:time-comparison}Time comparison (s). Reported values are the average of 40 independent runs.}
\centering
\begin{tabular}[t]{>{}llllllll}
\toprule
\textbf{Dataset} & \textbf{Network} & \textbf{Uncal} & \textbf{TS} & \textbf{ETS} & \textbf{GPC} & \textbf{MetaMis} & \textbf{MetaAcc}\\
\midrule
 & DenseNet40 & 0.004 & 0.074 & 0.143 & 4.722 & 0.102 & 0.094\\
\cmidrule{2-8}
 & ResNet110 & 0.004 & 0.066 & 0.050 & 5.255 & 0.127 & 0.084\\
\cmidrule{2-8}
 & ResNet110SD & 0.004 & 0.075 & 0.148 & 5.401 & 0.101 & 0.090\\
\cmidrule{2-8}
\multirow{-4}{*}{\raggedright\arraybackslash \textbf{CIFAR10}} & WideResNet32 & 0.004 & 0.078 & 0.133 & 5.116 & 0.105 & 0.093\\
\cmidrule{1-8}
 & DenseNet40 & 0.014 & 0.492 & 0.492 & 40.218 & 0.470 & 0.336\\
\cmidrule{2-8}
 & ResNet110 & 0.026 & 0.426 & 1.404 & 42.118 & 0.408 & 0.299\\
\cmidrule{2-8}
 & ResNet110SD & 0.015 & 0.357 & 0.931 & 40.867 & 0.421 & 0.327\\
\cmidrule{2-8}
\multirow{-4}{*}{\raggedright\arraybackslash \textbf{CIFAR100}} & WideResNet32 & 0.014 & 0.431 & 0.708 & 44.882 & 0.467 & 0.353\\
\cmidrule{1-8}
 & DenseNet161 & 0.304 & 12.707 & 99.488 & NA & 13.573 & 9.903\\
\cmidrule{2-8}
\multirow{-2}{*}{\raggedright\arraybackslash \textbf{ImageNet}} & ResNet152 & 0.309 & 15.624 & 111.589 & NA & 12.581 & 8.426\\
\bottomrule
\end{tabular}
\end{table*}

The comparison results of different calibration methods are shown in \Cref{tab:ece-comparison}.  
The empirical ECE are esitmated with 15 equal-width bins and ECE values reported in~\Cref{tab:ece-comparison} are the average of 40 independent runs. 
Since GPC does not converge on ImageNet, its results on ImageNet are marked as NA in~\Cref{tab:ece-comparison}. We note CE loss instead of MSE in optimizing ETS.

% the computational cost of \textbf{GPC} is considerably higher than other approaches. On CIFAR-100 data set, with only 10 inducing points, it takes 3000x longer CPU time to converge than the slowest method among others. 

From~\Cref{tab:ece-comparison}, it can be seen that our proposed method Meta-Cal performs much better w.r.t. the empirical ECE in almost all configurations. It should be noted that~\Cref{tab:ece-comparison} should be interpreted in a careful way, since our proposed method works in a novel setting, that is, post-hoc calibration under constraints. Essentially, we can interpret the improved calibration is due to the miscoverage price we pay. It is worth noting that after calibration, GPC has an around $0.2\%$ accuracy drop in ResNet110 and ResNet110SD on CIFAR-10 data set. It is possible that GPC is trading off accuracy for calibration in an implicit and uncontrolled way. ~\Cref{tab:time-comparison} shows the running time of different calibration methods. It can be seen Meta-Cal has little overhead over its base calibrator (TS in our case). Sometimes Meta-Cal takes less than time than TS because its base calibrator is optimized on a subset of the whole calibration data set.

\subsection{Verifying Constraints}
\label{sec:exp:verifying-constraints}

In this section, we empirically verify whether constraints of miscoverage rate and coverage accuracy are satisfied. The purpose of this section is to support our theoretical claims in~\Cref{sec:method:miscoverage-control} and~\Cref{sec:method:coverage-accuracy-control}.

The error bar plot in \Cref{fig:verify-miscoverage-rate} depicts $\pm 2$ standard deviations of the empirical miscoverage rate over 40 independent runs. It can be seen from~\Cref{fig:verify-miscoverage-rate} that the miscoverage rate is well-controlled given a miscoverage rate tolerance equal to $0.05$. In \Cref{fig:verify-coverage-accuracy}, the $\pm 2$ standard deviation of the empirical coverage accuracy is illustrated. Given the desired coverage accuracies are set to be $0.97$, $0.87$ and $0.85$ for CIFAR-10, CIFAR-100 and ImageNet, respectively, it can be seen that the coverage accuracy is well-controlled.

% \begin{figure*}[ht]
% \centering
% \subfloat[VGG16: all iterations]{
% \includegraphics[width=0.95\textwidth]{assets/metacal-miscoverage-rate.pdf}
% \label{fig:model-pruning-vgg16-cummax-300-iterations}}\\

% \subfloat[VGG16: all iterations]{
% \includegraphics[width=0.95\textwidth]{assets/metacal-coverage-accuracy.pdf}
% \label{fig:model-pruning-vgg16-cummax-300-iterations}}
% \end{figure*}

\section{Conclusion}
\label{sec:conclusion}

In this work, we propose Meta-Cal: a post-hoc calibration method for multi-class classification under practical constraints, including miscoverage rate and coverage accuracy.
Our proposed approach can augment an accuracy-preserving calibration map and improve its calibration performance. Contrary to post-hoc calibration methods that cannot preserve accuracy, our approach has full control over coverage accuracy. 
A series of theoretical results are given to show the validity of Meta-Cal. 
Empirical results on a range of neural network classifiers on several popular computer vision data sets show our approach is able to improve an existing calibration method. 
% In addition, our theoretical results are empirically validated. \todo{blabla}

\subsection*{Acknowledgements}
Xingchen Ma is supported by Onfido.
This research received funding from the Flemish Government under the “Onderzoeksprogramma Artificiële Intelligentie (AI) Vlaanderen” programme.

\bibliography{mybib}
% \bibliographystyle{icml2021}

% insert my supp
% \begin{figure*}[!b]
%     \centering
%     \includegraphics[page=1]{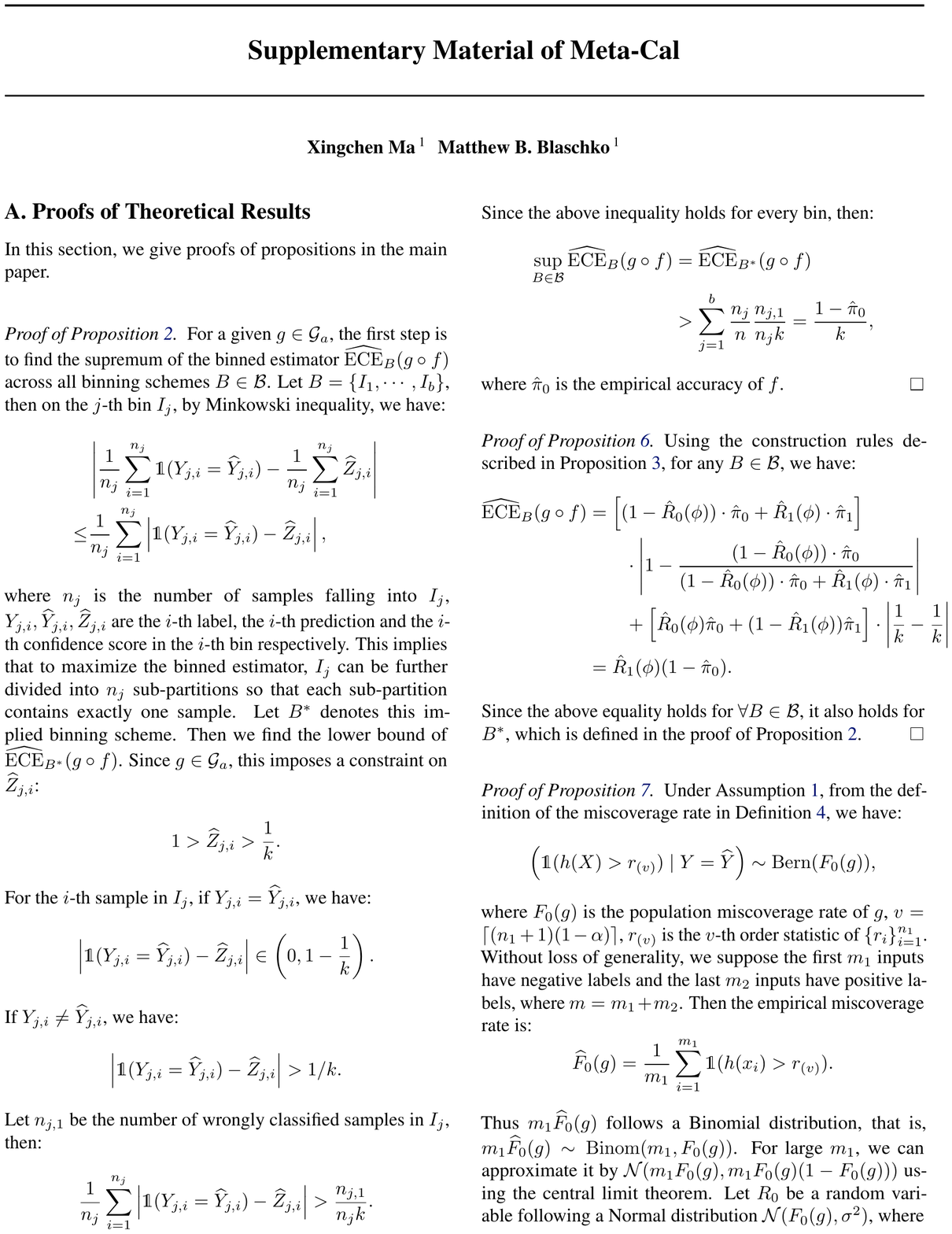}
%   \end{figure*}

\includepdf[pages=-]{icml2021-supp}
\end{document}

% --- supplement: supp.tex ---

\twocolumn[
% \icmltitle{Rethinking the Goal of Post-calibration}
\icmltitle{Supplementary Material of Meta-Cal}

\begin{icmlauthorlist}
\icmlauthor{Xingchen Ma}{kul}
\icmlauthor{Matthew B.\ Blaschko}{kul}
\end{icmlauthorlist}

\icmlaffiliation{kul}{ESAT-PSI, KU Leuven, Belgium}

\icmlcorrespondingauthor{Xingchen Ma}{xingchen.ma@esat.kuleuven.be}

\icmlkeywords{Calibration, Ranking}

\vskip 0.3in
]

\appendix

\section{Proofs of Theoretical Results}
\label[supp]{supp:proofs}
In this section, we give proofs of propositions in the main paper.

\begin{proof}[Proof of~\Cref{prop:acc-pre-ece-lower-bound}]
For a given $g \in \mathcal{G}_a$, the first step is to find the supremum of the binned estimator $\widehat{\ECE}_B(g \circ f)$ across all binning schemes $B \in \mathcal{B}$. Let $B = \{ I_1,\cdots,I_b \}$, then on the $j$-th bin $I_j$, by Minkowski inequality, we have:
\begin{align*}
& \left| \frac{1}{n_{j}} \sum_{i=1}^{n_{j}} \mathds{1}(Y_{j,i}=\widehat{Y}_{j,i}) - \frac{1}{n_{j}} \sum_{i=1}^{n_{j}} \widehat{Z}_{j,i} \right| \\
\le & \frac{1}{n_{j}} \sum_{i=1}^{n_{j}}  \left| \mathds{1}(Y_{j,i}=\widehat{Y}_{j,i}) -  \widehat{Z}_{j,i} \right|,
\end{align*}
where $n_{j}$ is the number of samples falling into $I_{j}$, $Y_{j,i},
\widehat{Y}_{j,i}, \widehat{Z}_{j,i}$ are the $i$-th label, the $i$-th
prediction and the $i$-th confidence score in the $i$-th bin respectively. 
This implies that to maximize the binned estimator, $I_j$ can be further divided into $n_j$ sub-partitions so that each sub-partition contains exactly one sample. Let $B^*$ denotes this implied binning scheme.
Then we find the lower bound of $\widehat{\ECE}_{B^*}(g \circ f)$. 
Since $g \in \mathcal{G}_a$, this imposes a constraint on $\widehat{Z}_{j,i}$:
\begin{align*}
1 > \widehat{Z}_{j,i} > \frac{1}{k}.
\end{align*}
For the $i$-th sample in $I_j$, if $Y_{j,i}=\widehat{Y}_{j,i}$, we have:
\begin{align*}
\left| \mathds{1}(Y_{j,i}=\widehat{Y}_{j,i}) -  \widehat{Z}_{j,i} \right| \in \left( 0, 1 - \frac{1}{k}  \right).
\end{align*}
If $Y_{j,i} \neq \widehat{Y}_{j,i}$, we have:
\begin{align*}
\left| \mathds{1}(Y_{j,i}=\widehat{Y}_{j,i}) -  \widehat{Z}_{j,i} \right| > 1 / k.
\end{align*}
Let $n_{j,1}$ be the number of wrongly classified samples in $I_j$, then:
\begin{equation*}
    \frac{1}{n_{j}} \sum_{i=1}^{n_{j}}  \left| \mathds{1}(Y_{j,i}=\widehat{Y}_{j,i}) -  \widehat{Z}_{j,i} \right| > \frac{n_{j,1}}{n_j k}.  
\end{equation*}

Since the above inequality holds for every bin, then:
\begin{align*}
\sup_{B \in \mathcal{B}} \widehat{\ECE}_B(g \circ f ) & = \widehat{\ECE}_{B^*}(g \circ f) \\
& > \sum_{j=1}^{b} \frac{n_j}{n} \frac{n_{j,1}}{n_j k}  = \frac{1 - \hat{\pi}_0}{k},
\end{align*}
where $\hat{\pi}_0$ is the empirical accuracy of $f$.
\end{proof}

\begin{proof}[Proof of \Cref{prop:ece-naive-binary-classifier}]
Using the construction rules described in~\Cref{prop:optimal-calibration}, for any $B \in \mathcal{B}$, we have:
\begin{align*}
\widehat{\ECE}_B(g \circ f ) & = \left[ (1 - \hat{R}_{0}(\phi)) \cdot \hat{\pi}_{0} + \hat{R}_{1}(\phi) \cdot \hat{\pi}_{1}  \right] \\
&\qquad \cdot \left| 1 - \frac{(1 - \hat{R}_{0}(\phi)) \cdot \hat{\pi}_{0}}{(1 - \hat{R}_{0}(\phi)) \cdot \hat{\pi}_{0} + \hat{R}_{1}(\phi) \cdot \hat{\pi}_{1} } \right| \\
&\qquad + \left[ \hat{R}_{0}(\phi) \hat{\pi}_{0} + (1 - \hat{R}_{1}(\phi)) \hat{\pi}_{1} \right] \cdot \left| \frac{1}{k} - \frac{1}{k} \right| \\
& = \hat{R}_{1}(\phi) (1 - \hat{\pi}_{0}).
\end{align*}
Since the above equality holds for $\forall B \in \mathcal{B}$, it also holds for $B^*$, which is defined in the proof of~\Cref{prop:acc-pre-ece-lower-bound}.
\end{proof}

\begin{proof}[Proof of~\Cref{prop:miscoverage-rate-control}] Under~\Cref{assump:h-continuous}, from the definition of the miscoverage rate in~\Cref{def:miscoverage-rate}, we have: 
\begin{equation*}
    \left( \mathds{1}(h(X) > r_{(v)}) \mid Y=\widehat{Y} \right) \sim \Bern (F_{0}(g)),
\end{equation*}
where $F_{0}(g)$ is the population miscoverage rate of $g$, $v = \lceil (n_1  +1) (1-\alpha) \rceil$, $r_{(v)}$ is the $v$-th order statistic of $\{ r_i \}_{i=1}^{n_1}$. Without loss of generality, we suppose the first $m_1$ inputs have negative labels and the last $m_2$ inputs have positive labels, where $m=m_1 + m_2$. Then the empirical miscoverage rate is:
\begin{equation*}
    \widehat{F}_{0}(g) = \frac{1}{m_1} \sum_{i=1}^{m_1} \mathds{1}(h(x_i) > r_{(v)}).
\end{equation*}
Thus $m_1 \widehat{F}_{0}(g)$ follows a Binomial distribution, that is, $m_1 \widehat{F}_{0}(g) \sim \Binom(m_1, F_{0}(g))$. For large $m_1$, we can approximate it by $\mathcal{N}(m_1 F_{0}(g), m_1 F_{0}(g) (1 - F_{0}(g)))$ using the central limit theorem. 
Let $R_0$ be a random variable following a Normal distribution $\mathcal{N}(F_{0}(g), \sigma^{2})$, where $\sigma^{2} = F_{0}(g) (1- F_{0}(g)) / m_1$. 
Applying the Chernoff bound for a Gaussian variable:
\begin{equation*}
    \mathbb{P}\left(\left| R_{0} - F_{0}(g) \right| \ge \delta\right) \le 2 \exp \left( -\frac{\delta^{2}}{2 \sigma^{2}} \right).
\end{equation*}
Combining the above results finishes the proof of~\Cref{prop:miscoverage-rate-control}.
\end{proof}

\begin{proof}[Proof of~\Cref{prop:meta-cal-ece-lower-bound}]
This follows directly from~\Cref{prop:acc-pre-ece-lower-bound}.
\end{proof}

\begin{proof}[Proof of~\Cref{prop:risk-control}]
Under~\Cref{assump:h-continuous,assump:eta-square-integral}, from the definition of coverage accuracy in~\Cref{def:coverage-accuracy} and~\Cref{prop:existence-mono-dec-cov-acc}, we have: 
\begin{equation*}
     \left( \mathds{1}(Y=\widehat{Y}) \mid h(X) < l^{-1}(\beta) \right) \sim \Bern (\beta).
\end{equation*}
Without loss of generality, we suppose ranking scores of the first $m_1 \le m$ inputs are smaller than $l^{-1}(\beta)$. The empirical coverage accuracy is:
\begin{equation*}
    \widehat{F}_{1}(g) = \frac{1}{m_1} \sum_{i=1}^{m_1} \mathds{1}(y_i = \hat{y}_i).
\end{equation*}
Thus $m_1 \widehat{F}_{1}(g)$ follows a Binomial distribution and can be approximated by a Gaussian distribution for large $m_1$. The remaining proof is similar to the proof of~\Cref{prop:miscoverage-rate-control}.
\end{proof}

\section{A Trivial Construction}
\label[supp]{supp:trivial-g}

In this section, we show bounding $\widehat{\ECE}_B(g \circ f)$ instead of its supremum makes little sense by giving a trivial construction. 
Suppose we are to evaluate the binned estimator on 4 samples and their predicted confidence scores, predictions, targets are $\{ 0.6, 0.7, 0.8, 0.81 \}$, $\{ 0,0,0,0\}$, $\{ 0,0,0,1\}$ respectively. Let the binning scheme be the interval $B = [0,1]$, we can construct a accuracy-preserving calibration map:
\begin{equation*}
g(x) = \left\{
\begin{array}{l@{\quad,\quad}l}
0.9 & \text{if } x=0.81 \\
x & \text{otherwise.}
\end{array}\right.
\end{equation*}
The binned estimator using such a calibration map on this data set is 0. Obviously, this calibration map is not practical.

\section{Training Details}
\label[supp]{supp:train-detials}

To train networks on CIFAR-10 and CIFAR-100, we use stochastic gradient descent with momentum (0.9) using mini-batches of 128 samples for 200 epochs. We also add a L2-weight decay regularization term, which is set to be 0.0001. The start learning rate is set to be 0.1 and is decreased to 0.01 and 0.001 in the beginning of the 80-th epoch and 150-th epoch respectively. 
Horizontal flipping and cropping are used as data augmentation.

% \bibliography{mybib}